\newcommand{\bbE}{\mathbb{E}}
\newcommand{\bbR}{\mathbb{R}}
\newcommand{\bbP}{\mathbb{P}}
\newcommand{\bbN}{\mathbb{N}}
\newcommand{\calA}{\mathcal{A}}
\newcommand{\calB}{\mathcal{B}}
\newcommand{\calD}{\mathcal{D}}
\newcommand{\calM}{\mathcal{M}}
\newcommand{\calP}{\mathcal{P}}
\newcommand{\calQ}{\mathcal{Q}}
\newtheorem{lemma}{Lemma}
\newtheorem{theorem}{Theorem}
\newtheorem{corollary}{Corollary}
\newtheorem{definition}{Definition}
\newtheorem{remark}{Remark}
\title{Improved Regret in Stochastic Decision-Theoretic Online Learning under Differential Privacy}
\author{%
  Ruihan Wu \\
  UC, San Diego\\
  \texttt{ruw076@ucsd.edu} \\
  \And
  Yu-Xiang Wang \\
  UC, San Diego\\
  \texttt{yuxiangw@ucsd.edu} \\
}
\begin{document}

\maketitle

\begin{abstract}
\citet{hu2024open} posed an open problem: \emph{what is the optimal instance-dependent rate for the stochastic decision-theoretic online learning (with $K$ actions and $T$ rounds) under $\varepsilon$-differential privacy?}
  Before, the best known upper bound and lower bound are $O\left(\frac{\log K}{\Delta_{\min}} + \frac{\log K\log T}{\varepsilon}\right)$ and $\Omega\left(\frac{\log K}{\Delta_{\min}} + \frac{\log K}{\varepsilon}\right)$ (where $\Delta_{\min}$ is the gap between the optimal and the second actions).
  In this paper, we partially address this open problem by having two new results.
  First, we provide an improved upper bound for this problem $O\left(\frac{\log K}{\Delta_{\min}} + \frac{\log^2K}{\varepsilon}\right)$, which is $T$-independent and only has a log dependency in $K$. 
  Second, to further understand the gap, we introduce the \textit{deterministic setting}, a weaker setting of this open problem, where the received loss vector is deterministic.
  At this weaker setting, a direct application of the analysis and algorithms from the original setting still leads to an extra log factor.
  We conduct a novel analysis which proves upper and lower bounds that match at $\Theta(\frac{\log K}{\varepsilon})$.
\end{abstract}

\section{Introduction}
Differential privacy (DP; \citet{dwork2014algorithmic}) provides a formal guarantee of data privacy, requiring that the outputs from any two datasets differing in a single individual's data do not differ significantly.
In sequential decision-making settings, where the dataset consists of a sequence of observed losses or rewards, DP is extended to compare outputs from two sequences that differ at a single time step.
DP has been extensively studied in two key sequential decision-making frameworks (online learning~\citep{cesa2006prediction,arora2012multiplicative} and multi-arm bandit~\citep{lai1985asymptotically}) under various settings~\citep{jain2012differentially, guha2013nearly,jain2014near,agarwal2017price,tossou2017achieving,sajed2019optimal,hu2021near,asi2023private}.

In this paper, we study stochastic decision-theoretic online learning~\citep{freund1997decision} under \emph{pure} differential privacy, a setting identified as an \emph{open problem} by \citet{hu2024open}.
In this problem, there are $K$ actions, each associated with an unknown distribution of loss.
At each time step, the learner selects an action and observes a stochastic loss drawn from its distribution.
The problem is under the full-information setting, where the learner observes the stochastic losses of all actions after picking one, with the goal of minimizing the expected cumulative loss over time.

\begin{table}[!t]
\label{tab:result}
\centering
\caption{A summarization of the previous existing results and our new results for the problem \textit{stochastic decision-theoretic online learning under differential privacy}.}
\resizebox{\linewidth}{!}{
\begin{tabular}{c|c|c}
	\toprule
	Settings 	& Lower bound & Upper bound\\
	\midrule
	\multirow{2}{*}{\makecell{Instance-dependent bound\\ for the \emph{original setting}}} & \multirow{2}{*}{$\frac{\log K}{\Delta_{\min}} + \frac{\log K}{\varepsilon}$~\citep{hu2021near}} & $\frac{\log K}{\Delta_{\min}} + \frac{\log K \log T}{\varepsilon}$~\citep{hu2021near}\\
	\cmidrule{3-3}
	& & $\frac{\log K}{\Delta_{\min}} + \frac{\log^2 K}{\varepsilon}$ (\textbf{This work})\\
	\midrule
	\multirow{3}{*}{\makecell{Instance-\emph{in}dependent bound\\ for the \emph{original setting}}} & \multirow{3}{*}{$\sqrt{T\log K} + \frac{\log K}{\varepsilon}$~\citep{hu2021near}} & $\sqrt{T\log K} + \frac{K\log K\log^2 T}{\varepsilon}$~\citep{jain2014near}\\
	\cmidrule{3-3} 
	& & $\sqrt{T\log K} + \frac{\log K \log T}{\varepsilon}$~\citep{asi2023private, hu2021near}\\
	\cmidrule{3-3} 
	& & $\sqrt{T\log K} + \frac{\log^2 K}{\varepsilon}$ (\textbf{This work})\\
	\midrule
	\multirow{2}{*}{\makecell{The \textit{deterministic setting}}} & \multirow{2}{*}{$\frac{\log K}{\varepsilon}$ (\textbf{This work})} & $\frac{\log^2 K }{\varepsilon}$ (extended from our result in the original setting)\\
	\cmidrule{3-3} 
	& & $\frac{\log K}{\varepsilon}$ (\textbf{This work})\\
	\bottomrule
\end{tabular}
}
\end{table}

\begin{table}[!t]
\centering
\caption{Detailed specifications in Algorithm~\ref{alg:main} that achieve the exising result and our new results.}
\label{tab:spec}
\resizebox{\linewidth}{!}{
\begin{tabular}{c|c|c}
	\toprule
		& Bernoulli resampling or not ($B$) & Noise distribution in report-noisy-max ($\calQ_\varepsilon$) \\
	\midrule
	Theorem~\ref{thm:eixst} \citep{hu2021near} & no & Laplace distribution\\
	\midrule
	\makecell{Theorem~\ref{thm:main} (\textbf{This work})} & yes & Laplace distribution, Exponential distribution, Gumbel distribution\\
	\midrule
	\makecell{Theorem~\ref{thm:upper_det} (\textbf{This work})\\ (for \textit{deterministic setting})} & no & Exponential distribution, Gumbel distribution\\
	\bottomrule
\end{tabular}
}
\end{table}

\citet{jain2014near} provides an instance-independent bound $O\left( \sqrt{T\log K}  + \frac{K\log K\log^2 T}{\varepsilon}\right)$ for general online linear optimization, which can be adapted as an upper bound for this problem.
The best instance-independent bound so far for this problem is $O\left(\sqrt{T\log K} + \frac{\log K \log T}{\varepsilon}\right)$, achieved by \citet{asi2023private} and \citet{hu2021near}, where the lower bound is $O\left(\sqrt{T\log K}+\frac{\log K}{\varepsilon}\right)$.
Particularly, the open problem~\citep{hu2024open} asked for the instance-dependent bound in terms of $K, T, \varepsilon, \Delta_{\rm min}$, where $\Delta_{\min}$ is the gap of expected losses between the optimal and the second actions. 
The best existing instance-dependent bound is $O\left(\frac{\log K}{\Delta_{\min}} + \frac{\log K \log T}{\varepsilon}\right)$~\citep{hu2021near} and the proved lower bound is $\Omega\left(\frac{\log K}{\Delta_{\min}} + \frac{\log K}{\varepsilon}\right)$.
The algorithm in \citet{hu2021near} for these two bounds is quite standard: the algorithm applies a doubling metric to divide the time dimensions into epochs. 
At each epoch, it accumulates the observed loss vectors first, and uses a standard DP mechanism, report-noisy-max~\citep{dwork2014algorithmic} with Laplace noise, to pick an action for the whole next epoch.
The algorithm is presented in Algorithm~\ref{alg:main}.

We propose a variant of the algorithm from \citet{hu2021near} that first resamples the stochastic loss vectors into Bernoulli variables before accumulating them.
This modification enables an analysis of a new instance-dependent upper bound of $O\left(\frac{\log K}{\Delta_{\min}} + \frac{\log^2 K}{\varepsilon}\right)$.
Compared to the best existing bound stated in Theorem~\ref{thm:eixst}, our result improves performance when $T>K$ (a small burn-in period).
Notably, we are the first to demonstrate that the instance-dependent regret remains constant in $T$ and depends only logarithmically on $K$, as the lower bound predicts.
As a simple corollary, it also provides a new instance-independent upper bound $O\left(\sqrt{T\log K} + \frac{\log^2 K}{\varepsilon}\right)$ with a similar improvement.

By comparing the upper and lower bound, the extra factor appears alongside the DP parameter $\varepsilon$.
To better understand the existence of the extra factor, 
we consider a strictly weaker setting in which the received loss vectors are deterministic, allowing us to isolate differential privacy (DP) from the stochasticity of observed losses.
Although this setting is not realistic, directly applying the analysis and algorithms developed for the original open problem still yields the same extra factor.
In this weaker setting, we introduce a new variant of the algorithm from \citet{hu2021near}, replacing the Laplace noise in the report-noisy-max mechanism with either exponential or Gumbel noise.
We prove a lower bound for this deterministic setting and show that the upper bound achieved by our new algorithm matches it, attaining a rate of $\frac{\log K}{\varepsilon}$.
Finally, we discuss how these findings in the simplified setting offer insights into the original open problem.

We summarize both prior results and our new contributions in Table~\ref{tab:result}.
Additionally, Table~\ref{tab:spec} outlines the specific variants of Algorithm~\ref{alg:main} that achieve the existing and proposed results.
\textbf{The organization of this paper}: 
Section~\ref{sec:prelim} introduces the problem setting, the existing results, and the related work;
Section~\ref{sec:main} presents our new results for the open problem;
Section~\ref{sec:main_det} describes our new results for the deterministic setting of the open problem and discusses how these findings may inform further progress;
The appendix contains additional proofs and technical details.

\begin{algorithm}[!t]
\caption{Variants of RNM-FTNL($B$, $\calQ_{\varepsilon}$)}
\label{alg:main}
\begin{algorithmic}[1]
\STATE \textbf{Specifying the variant:} a bit $B\in\{0, 1\}$ for indicating whether the loss vector is resampled or not; a noise distribution $\calQ_{\varepsilon}$ parametrized by $\varepsilon$.\ \ \ \ \texttt{\small $\backslash\backslash$ The original RNM-FTNL~\citep{hu2021near} can be recovered by setting $B=0$ and $\calQ_{\varepsilon}$ as the laplace distribution $\mathrm{Lap}(\frac{2}{\varepsilon})$.} 
\STATE \textbf{Input:} Action set $[K]$ and privacy parameter $\varepsilon$
\STATE Draw $J_{0}$ from a uniform distribution over $[K]$.
\FOR{$r=1, \cdots, \lceil\log_2(T-1)\rceil+1$}
\STATE Set $G_{r} = (0, \cdots, 0)\in \bbR^{K}$
\FOR{$t=2^{r-1}, \cdots, 2^{r}-1$}
\STATE Play the action $I_t \leftarrow J_{r-1}$.
\STATE Receive the loss vector $\ell^{(t)}=(\ell^{(t)}_1, \cdots, \ell^{(t)}_K)\sim \calP_1\times \cdots \times \calP_K$.
\IF{$B=0$}
\STATE $\tilde{\ell}^{(t)} \leftarrow \ell^{(t)}$
\ELSE
\STATE $\tilde{\ell}^{(t)} \leftarrow (\tilde{\ell}_{1}^{(t)}, \cdots, \tilde{\ell}_{K}^{(t)})\sim \calB(\ell_{1}^{(t)}) \times\cdots \times \calB(\ell_{K}^{(t)})$, where $\calB(p)$ is the Bernoulli distribution with mean $p$.\ \ \ \ \texttt{\small $\backslash\backslash$ Bernoulli resampling}
\ENDIF
\STATE $G_{r}\leftarrow G_{r} + \tilde{\ell}^{(t)}$
\ENDFOR
\STATE $J_{r} \leftarrow \arg\max_{j\in K} -G_{r, j}  + Q_{r, j}$ where $Q_{r, j}\sim \calQ_{\varepsilon}$
\ENDFOR
\end{algorithmic}
\end{algorithm}

\section{Preliminaries}
\label{sec:prelim}
\subsection{Problem setting}
In this paper, we focus on the open problem posed by \citet{hu2024open} and we begin by reviewing the problem setting in this section.
The stochastic variant of decision-theoretic online learning~\citep{freund1997decision} assumes a finite set of $K$ actions.
Each action $i\in [K]$ is associated with an unknown loss distribution $\calP_i$ that is unknown to the learner, whose support lies within $[0,1]$ --  this support assumption follows the same set-up as the open problem in \citet{hu2024open}.

At each time step $t=1, \cdots, T$:
\begin{enumerate}[leftmargin=*,nosep]
\item The learner picks any action $I_t\in [K]$ according to any (randomized) algorithm $\calM$.
\item The learning algorithm suffers loss $\ell_{I_t}^{(t)}\sim \calP_{I_t}$.
\item The learner observes the losses of all the actions, a loss vector $\ell^{(t)}:=(\ell^{(t)}_1, \cdots, \ell^{(t)}_K)\sim \calP_1\times \cdots \times \calP_K$.
\end{enumerate}
The goal is to minimize the pseudoregret $\mathrm{PseudoRegret}(\calA; T, \calP_1, \cdots, \calP_K)$, which is the gap between the expectation of accumulated suffered losses and the minimum expectation of accumulated loss among $K$ actions:
$$
\bbE\left[\sum_{t=1}^T \ell_{I_t}^{(t)}\right] - \min_{i\in [K]}\bbE\left[\sum_{t=1}^T \ell_{i}^{(t)}\right],
$$
where the randomness in the expectation is contributed by both the loss vector $\ell^{(t)}$ and the randomized algorithm $\calM$.
We further denote $\mu_i$ as the expectation of the loss from action $i$, $\bbE_{\ell_i\in \calP_i}\left[\ell_{i}\right]$.
Without the loss of generality, we assume $\mu^* = \mu_1 < \mu_2\leq\cdots \mu_K$.
Furthermore, we denote the gaps $\Delta_i:=\mu_i - \mu_1$ and specifically, we denote the gap between the optimal and second optimal by $\Delta_{\min}:= \mu_2 - \mu_1$.
With the notations of gaps, the pseudoregret can be rewritten:
\begin{align}
\label{eq:regret}
\mathrm{PseudoRegret}(\calA; T, \calP_1, \cdots, \calP_K) &= \bbE\left[\sum_{t=1}^T \mu_{I_t}\right] - T\cdot\mu_1 = \sum_{t=1}^T \bbE\left[\Delta_{I_t}\right].
\end{align}
The optimal rate for the pseudoregret at this non-private setting is $\frac{\log(K)}{\Delta_{\min}}$, given by \citet{kotlowski2018minimaxity,mourtada2019optimality}.

In this paper, we study the problem under the framework of differential privacy (DP; \citet{dwork2014algorithmic}), a standard definition of privacy that requires the outcome distribution from the given randomized algorithm would not be changed too much if only one individual in the dataset has been changed.
Particularly, differential privacy in online learning~\citep{dwork2010differential} is \textit{event-level},  which assumes the individual is the loss vector at a single time step $t$ and the formal definition is as follow; also in this paper we only consider the \emph{pure} DP rather than \textit{approximate} DP, as set in the open prolem~\citep{hu2024open}.

\begin{definition}[Differential privacy in online learning]
	A randomized online learning algorithm $\calM$ is $\varepsilon$-differentially private if for any two loss vector sequences $\ell^{(1:t)}=(\ell^{(\tau)})_{\tau \in[t]}$ and $(\ell')^{(1:t)}$ differing in at most one vector and any decision set $\calD_{1:t}\subseteq [K]^t$, we have $\bbP[\calM(\ell^{(1:t)})\in\calD_{1:t}]\leq e^{\varepsilon}\cdot\bbP[\calM((\ell')^{(1:t)})\in\calD_{1:t}]$ for all $t\leq T$.
\end{definition}

We now state the open problem posed by \citet{hu2024open}: for the stochastic variant of decision-theoretic online learning,
$$\textbf{what is the optimal instance-dependent rate for the pseudoregret under $\varepsilon$-differential privacy?}$$
Or equivalently, what is the optimal rate in terms of $\varepsilon, \Delta_{\min}, K, T$ for the pseudoregret (Equation~\ref{eq:regret}) that can be achieved by any algorithm?


\subsection{Best existing results}
\label{sec:exist}
The best lower bound for this open problem so far, proved by \citet{hu2021near}, is 
\begin{equation}
\label{eq:low}
\Omega\left(\frac{\log K}{\Delta_{\min}} + \frac{\log K}{\varepsilon}\right).
\end{equation}
The lower bound means that the pseudoregret of any $\varepsilon$-DP algorithm cannot have a better rate than this lower bound for all problem instances ($T, \calP_1, \cdots, \calP_k$).
\citet{hu2021near} also introduces the algorithm FNM-FTNL, which achieves the best rate so far for upper bounding the pseudoregret, 
$$O\left(\frac{\log K}{\Delta_{\min}} + \frac{\log K \log T}{\varepsilon}\right).$$
We present their algorithm FNM-FTNL in Algorithm~\ref{alg:main}, by specifying $B=0$ and the noise distribution $\calQ_{\varepsilon}$ as the laplace distribution $\mathrm{Lap}(\frac{2}{\varepsilon})$; $\mathrm{Lap}(\beta)$ has the probability density function $f(x)=\frac{\beta}{2}e^{-\frac{|x|}{\beta}}$ for $x\in \mathbb{R}$.
The algorithm applies a doubling trick to divide the time dimensions into epochs. 
At each epoch $r$, it accumulates the received loss vectors first and uses the report-noisy-max DP mechanism~\citep{dwork2014algorithmic} (with the laplace noise) to pick an action $J_r$ for the next epoch $r+1$ while preserving the $\varepsilon$-DP guarantee.
We formally state their results in the following theorem.
\begin{theorem}[Best existing result; \citep{hu2021near}.]
\label{thm:eixst} 
	When specifying $B=0$ and $\calQ_{\varepsilon}$ as the laplace distribution $\mathrm{Lap}(\frac{2}{\varepsilon})$, Algorithm~\ref{alg:main} is $\varepsilon$-differentially private and satisfies the gaurantee
	\begin{equation}
	\label{eq:pseudo_regret_exist}
	\mathrm{PseudoRegret}(\text{RNM-FTNL}(B, \calQ_{\varepsilon}); T, \calP_1, \cdots, \calP_K) = O\left(\frac{\log K}{\Delta_{\min}} + \frac{\log K \log T}{\varepsilon}\right).
	\end{equation}
\end{theorem}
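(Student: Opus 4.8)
The plan is to prove the two assertions — $\varepsilon$-differential privacy and the $O\!\left(\frac{\log K}{\Delta_{\min}}+\frac{\log K\log T}{\varepsilon}\right)$ pseudoregret bound — by the ``noisy follow‑the‑leader in doubling epochs'' template, being careful about how the per‑epoch estimates are summed.

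For \emph{privacy}, the key observation is that the loss vector $\ell^{(\tau)}$ received at a time $\tau$ feeds only the epoch $r$ with $2^{r-1}\le\tau\le 2^{r}-1$: it enters $G_{r}$ and hence affects only the draw of $J_{r}$, while $J_{0}$ and every $J_{r'}$ with $r'\neq r$ are computed from disjoint blocks of the sequence together with fresh, independent noise. Inside epoch $r$ the update $J_{r}=\arg\max_{j}(-G_{r,j}+Q_{r,j})$ with $Q_{r,j}\sim\mathrm{Lap}(2/\varepsilon)$ is exactly the report‑noisy‑max mechanism on the scores $-G_{r,j}$; changing one $\ell^{(\tau)}$ perturbs each $G_{r,j}$ by at most $1$ (losses lie in $[0,1]$), so with Laplace scale $2/\varepsilon$ this step is $\varepsilon$‑DP. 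Since neighbouring loss sequences differ in a single time step, hence in a single epoch, parallel composition gives that the tuple $(J_{0},J_{1},\dots)$ is $\varepsilon$‑DP, and the played sequence $I_{1:t}$ is a deterministic post‑processing of that tuple; thus $\calM$ is $\varepsilon$‑DP.

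For the \emph{regret}, I would decompose by epochs: epoch $r$ lasts $n_{r}=2^{r-1}$ rounds, plays $J_{r-1}$ throughout, and then forms $J_{r}$ from the $n_{r}$ fresh i.i.d.\ loss vectors of that epoch, so
\[
\mathrm{PseudoRegret}=\sum_{r\ge1}n_{r}\,\bbE[\Delta_{J_{r-1}}]\ \le\ 1+\sum_{r\ge1}2n_{r}\,\bbE[\Delta_{J_{r}}],
\]
using $\bbE[\Delta_{J_{0}}]=\frac1K\sum_{i}\Delta_{i}\le1$ and $n_{r}=2n_{r-1}$. The core estimate is a tail bound on $\Delta_{J_{r}}$. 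Conditioning on the data, $J_{r}=i$ forces $Q_{r,i}-Q_{r,1}\ge G_{r,i}-G_{r,1}$, and $G_{r,i}-G_{r,1}=\sum_{t\in\text{epoch }r}(\ell^{(t)}_{i}-\ell^{(t)}_{1})$ is a sum of $n_{r}$ i.i.d.\ variables in $[-1,1]$ with mean $n_{r}\Delta_{i}$. A Hoeffding bound for the event that this sum falls below $n_{r}\Delta_{i}/2$, together with the sub‑exponential Laplace tails and a union bound over the (at most $K$) noise terms needed to control $\max_{i}Q_{r,i}$, gives for an absolute constant $c>0$
\[
\bbP[\Delta_{J_{r}}\ge s]\ \le\ K e^{-c\,n_{r}s^{2}}+K e^{-c\,n_{r}s\varepsilon}\quad(s\ge\Delta_{\min}),\qquad \bbP[\Delta_{J_{r}}\ge s]=0\quad(s<\Delta_{\min}),
\]
the last because no suboptimal arm has gap below $\Delta_{\min}$. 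Writing $\bbE[\Delta_{J_{r}}]=\int_{\Delta_{\min}}^{1}\bbP[\Delta_{J_{r}}\ge s]\,ds$ and exchanging summation and integration, the regret is at most $1+\int_{\Delta_{\min}}^{1}(S_{1}(s)+S_{2}(s))\,ds$, where $S_{1}(s)=\sum_{r}2n_{r}\min(1,Ke^{-cn_{r}s^{2}})$ and $S_{2}(s)=\sum_{r}2n_{r}\min(1,Ke^{-cn_{r}s\varepsilon})$.

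The final step, which I expect to be the main obstacle, is bounding these two sums sharply. For each there is a ``threshold'' epoch $r_{s}$ past which the exponential drops below $1/K$; because $n_{r}$ doubles, below $r_{s}$ the partial sum of $2n_{r}$ is geometric and dominated by its last term, while above $r_{s}$ the summand decays super‑geometrically in $r$. This gives $S_{1}(s)=O(\log K/s^{2})$ and $S_{2}(s)=O(\log K/(s\varepsilon))$ — extracting the single $\log K$ (not $K$) from the maximum over noise terms while keeping the gap dependence tight is the delicate point. Integrating, $\int_{\Delta_{\min}}^{1}S_{1}(s)\,ds=O(\log K/\Delta_{\min})$. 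For $S_{2}$ one also uses the crude bound $S_{2}(s)\le\sum_{r}2n_{r}=O(T)$: splitting the integral at $s=\frac{\log K}{T\varepsilon}$ yields $\int_{\Delta_{\min}}^{1}S_{2}(s)\,ds=O(\log K/\varepsilon)+\frac{O(\log K)}{\varepsilon}\log T=O(\log K\log T/\varepsilon)$. The $\log T$ appears only here, and only because $S_{2}(s)$ scales like $1/s$ rather than $1/s^{2}$ — the privacy tail is exponential in the gap to the first power, not the square — so its integral over the scales $s$ is logarithmic; this is exactly the slack the Bernoulli‑resampling variant in the sequel removes. Combining the two integrals with the additive $1$ gives the claimed bound.
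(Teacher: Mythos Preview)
The paper does not actually prove Theorem~\ref{thm:eixst}; it is stated as the existing result of \citet{hu2021near} and no argument is given. The structure of that argument is, however, visible inside the paper's proof of Theorem~\ref{thm:main}: an \emph{arm-by-arm} decomposition $\sum_{j}\Delta_{j}\sum_{r}2^{r-1}\bbP[J_{r-1}=j]$, the per-arm tail $\bbP[J_{r}=j]\le c_{1}\exp\bigl(-2^{r+1}\Delta_{j}\min\{\Delta_{j},\varepsilon\}/c_{2}\bigr)$, a case split $\Delta_{j}\lessgtr\varepsilon$, and (for the $\Delta_{j}\le\varepsilon$ part) a dyadic grouping of arms by gap scale $\Delta_{(l)}=2^{l-1}\Delta_{\min}$. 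Your route is genuinely different: you use a \emph{layer-cake} integral over the gap level $s$, bound $\bbP[\Delta_{J_{r}}\ge s]$ uniformly over arms by a union bound, swap the sum over $r$ with the integral, and do the threshold-epoch summation pointwise in $s$. Both decompositions are valid and yield the same rate; yours is somewhat cleaner here because it sidesteps the grouping machinery, while the per-arm decomposition is exactly what the paper later refines (replacing the trivial early-epoch bound $\bbP[J_{r-1}=j]\le 1$ by $\le 1/j$).

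One slip to correct: the assertion $\bbP[\Delta_{J_{r}}\ge s]=0$ for $s<\Delta_{\min}$ is false --- on $(0,\Delta_{\min}]$ this probability equals $\bbP[J_{r}\neq 1]$, so in fact $\bbE[\Delta_{J_{r}}]=\Delta_{\min}\,\bbP[\Delta_{J_{r}}\ge\Delta_{\min}]+\int_{\Delta_{\min}}^{1}\bbP[\Delta_{J_{r}}\ge s]\,ds$. Summed over epochs the missing piece is at most $\Delta_{\min}\bigl(S_{1}(\Delta_{\min})+S_{2}(\Delta_{\min})\bigr)=O\!\left(\frac{\log K}{\Delta_{\min}}+\frac{\log K}{\varepsilon}\right)$, which is absorbed in the final bound, so the argument survives once this is patched. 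A smaller remark on your closing sentence: the paper's improvement in Theorem~\ref{thm:main} does not sharpen the privacy tail exponent; rather, Bernoulli resampling yields the monotonicity $\bbP[J_{r}=j]\le 1/j$, which replaces the early-epoch contribution $\sum_{j}1$ by a harmonic sum and thereby converts your $\log T$ into $\log K$.
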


\subsection{Related work}
The open problem is considering one specific setting in private online prediction from experts~\citep{asi2023private}.
Private online prediction from expert advice can have bandit setting and full information setting~\citep{guha2013nearly}, based on assuming the learner observes the reward or loss only from the selected action at the time or from all actions.
There are three models of adversaries at the full information setting from the strongest to the weakest: \textit{adaptive} adversaries, who can decide the loss (distribution) upon from the picked action from the last time step~\citep{jain2012differentially, guha2013nearly, jain2014near, agarwal2017price, asi2023private}; \textit{oblivious} adversaries, who decide a sequence of loss distributions before the online procedure~\citep{asi2023private}; \textit{stochastic} adversaries, who pick one loss distribution and at each time step sample the loss i.i.d. from this distribution~\citep{kairouz2021practical, hu2021near, asi2023private}.
The open problem studied in this paper is at the full information setting with the stochastic adversary.

Private online prediction from experts is a special case of private online linear optimization (OLO) and private online convex optimization (OCO)~\citep{guha2013nearly, agarwal2017price, kairouz2021practical, agarwal2023differentially,asi2023near, pmlr-v235-agarwal24d}, where the optimization constraint is as an L1-sphere.
Private OLO has been studied with different constraints too, such as the L2-ball or the cube, at both full-information setting and bandit setting.

%

\section{Main Result for the Open Problem}
\label{sec:main}

\subsection{An improved $T$-independent upper bound for the open problem.}
\label{sec:result_original}
\paragraph{New algorithm with the \textit{Bernoulli resampling}.} 
As introduced in Section~\ref{sec:exist}, the original RNM-FTNL algorithm achieves the best known upper bound.
Our improved regret rate is obtained by modifying RNM-FTNL with a key enhancement.
While retaining the same doubling trick and accumulation of loss vectors, we introduce an additional step called \textit{Bernoulli resampling}: each observed loss vector is resampled through a joint distribution of Bernoulli random variables, preserving the original expectation for each coordinate.
That is, each coordinate in the resampled vector is a Bernoulli variable with mean equal to its corresponding observed loss.
This modified algorithm is presented in Algorithm~\ref{alg:main}, with the resampling step specified by setting $B=1$.

First of all, notice that Bernoulli resampling reduces all problem instances to a proper subset of instances with Bernoulli loss distributions.
This means that the worst case of our new algorithm is \emph{not worse} than the one of the original RNM-FTNL\footnote{It remains unknown if Bernoulli resampling strictly improves the worst case.}.
Moreover, the step of Bernoulli resampling strictly reduces the regret for some problem instances.
Here is one example of such problem instance: Suppose there are two actions and their loss probabilities respectively are
$$
\bbP[\ell_1=0.3]=1; \bbP[\ell_2=0.4]=0.8, \bbP[\ell_2=0]=0.2;  
$$
In this example, action $1$ is the optimal action with the lower expected loss. 
However, the action $J_1$ to take decided by the original RNM-FTNL (see line 16 in Algorithm~\ref{alg:main}) will weigh more on the suboptimal action $2$: at an extreme case when there is no DP guarantee ($\varepsilon=\infty$), $P[J_1=2]=0.8$.
With the Bernoulli resampling in our new algorithm, a lemma introduced later states that the suboptimal action $2$ will always have less chance than the optimal action $1$ to be selected.
This means that our algorithm will force $P[J_1=2]\leq 0.5\leq P[J_1=1]$ to happen and as consequence, our algorithm will suffer smaller loss from this action.

\paragraph{Extending the algorithm by different DP mechanisms.} 
In the original algorithm, the Laplace noise is used to satisfy DP. Our main result also shows that the same results will hold for other two DP mechanisms: adding Exponential noise or the Gumbel noise (known as Exponential mechanism). Denote the Exponential distribution by $\mathrm{Exp}(\beta)$ with the probability density function $f(x)=\frac{1}{\beta} e^{-\frac{x}{\beta}}$ for $x\geq 0$, and denote the Gumbel distribution by $\mathrm{Gumbel}(\beta)$ with the probability density function $f(x)=\frac{1}{\beta}e^{-\frac{x}{\beta} - e^{-\frac{x}{\beta}}}$ for $x\in \mathbb{R}$. When $\calQ_{\varepsilon}$ is specified as $\mathrm{Exp}(\frac{1}{\varepsilon})$ and $\mathrm{Gumbel}(\frac{2}{\varepsilon})$, similar to the analysis for $\calQ_{\varepsilon}=\mathrm{Lap}(\frac{2}{\varepsilon})$, it is also proved that Algorithm~\ref{alg:main} is $\varepsilon$-DP. 
This is because each $J_r$ is $\varepsilon$-DP w.r.t. the received loss vectors in the last epoch~\citep{dwork2014algorithmic,qiao2021oneshot} and the sets of loss vectors in each epoch are disjoint.
 
In a later section, we will show that using these two alternative noise distributions allows for a tighter analysis in a strictly weaker setting—an analysis that we are not able to conduct with the Laplace noise used in the original RNM-FTNL.
This raises the possibility that these alternative noise distributions could also enable tighter analysis for the original open problem.
For completeness, we will demonstrate that these two additional DP mechanisms achieve the same regret bounds as those obtained using Laplace noise.

\paragraph{Main result: new $T$-independent rate for the open problem.}
In summary, our main result will be built on our new algorithm, which extends the original FNM-FTNL by adding the Bernoulli resampling and other DP mechanisms. The exact specifications of $B$ and $\calQ_{\varepsilon}$ in Algorithm~\ref{alg:main} for attaining our main result are listed in Table~\ref{tab:spec}.
We now formally state our main result and the proof idea will be introduced in the next subsection.

\begin{theorem}[Main result: new rate for the open problem.]
\label{thm:main}
	When specifying $B=1$ and $\calQ_{\varepsilon}$ as the Laplace distribution $\mathrm{Lap}(\frac{2}{\varepsilon})$, the Exponential distribution $\mathrm{Exp}(\frac{1}{\varepsilon})$, or the Gumbel distribution $\mathrm{Gumbel}(\frac{2}{\varepsilon})$, Algorithm~\ref{alg:main} is $\varepsilon$-differentially private and satisfies the guarantee
	\begin{equation}
		\label{eq:regret_main}
	\mathrm{PseudoRegret}(\text{RNM-FTNL}(B, \calQ_{\varepsilon}); T, \calP_1, \cdots, \calP_K) = O\left(\frac{\log K}{\Delta_{\min}} + \frac{\log^2K}{\varepsilon}\right).
	\end{equation}
\end{theorem}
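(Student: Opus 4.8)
The plan is to deduce the privacy claim from the $B=0$ case (Theorem~\ref{thm:eixst}) almost for free, using that Bernoulli resampling preserves neighborliness with probability one. Given two loss-vector sequences differing only at a single time $t_0$, I would couple the two resampled sequences so that they use identical Bernoulli randomness at every $t\neq t_0$ (hence agree there) and independent Bernoulli randomness at $t_0$; because a resampled vector always lies in $\{0,1\}^K$, the two resampled vectors at time $t_0$ differ in $\ell_\infty$-norm by at most $1$ deterministically, so the two resampled sequences are neighbors with probability one. Pushing this coupling through the rest of the algorithm---which is exactly RNM-FTNL with $B=0$ and therefore $\varepsilon$-DP with respect to its (resampled) input, since report-noisy-max with $\mathrm{Lap}(2/\varepsilon)$ applied to the disjoint per-epoch sums of $\{0,1\}^K$-valued vectors is $\varepsilon$-DP~\citep{dwork2014algorithmic}---gives $\bbP[\calM(\ell^{(1:t)})\in\calD]\le e^{\varepsilon}\bbP[\calM((\ell')^{(1:t)})\in\calD]$.

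\noindent\textbf{Two bounds on $\bbP[J_r=j]$.} The core is controlling the selection probabilities. Since the coordinates of each $\ell^{(t)}$ are independent and resampling is coordinatewise, $G_r$ has independent coordinates with $G_{r,j}\sim\mathrm{Binomial}(2^{r-1},\mu_j)$ (each resampled indicator is marginally $\calB(\mu_j)$). As $\mu_1\le\cdots\le\mu_K$ and the binomial CDF is monotone in the success probability, $G_{r,j_1}$ is stochastically dominated by $G_{r,j_2}$ for $j_1<j_2$, so the independent scores $X_j:=-G_{r,j}+Q_{r,j}$ satisfy $X_{j_1}\succeq X_{j_2}$ stochastically. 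I would then prove the monotonicity of $\bbP[J_r=\cdot]$ (Lemma~\ref{lem:bern}): conditioning on the maximum $w$ of the remaining scores and setting $f(x,y):=\mathbf{1}[x>y,\,x>w]-\mathbf{1}[y>x,\,y>w]$, note that $x\mapsto f(x,y)$ is nondecreasing for each fixed $y$, whence $\bbE[f(X_{j_1},X_{j_2})]\ge\bbE[f(X_{j_2},X_{j_2})]=0$---the last equality by antisymmetry of $f$ together with exchangeability when both arguments are i.i.d.\ (no ties almost surely, thanks to the Laplace noise). Since the $\bbP[J_r=j]$ sum to $1$ and are nonincreasing in $j$, this yields $\bbP[J_r=j]\le 1/j$. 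Independently, bounding $\bbP[J_r=j]$ by the probability that $j$ beats the optimal action $1$ and splitting the deviation $(G_{r,1}-G_{r,j})-(Q_{r,j}-Q_{r,1})\ge 2^{r-1}\Delta_j$ into a Hoeffding event for the binomials and a Laplace-tail event for the noise gives $\bbP[J_r=j]\le 2e^{-2^{r-1}\Delta_j^2/c_1}+2e^{-2^{r-1}\Delta_j\varepsilon/c_2}$ for absolute constants $c_1,c_2$.

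\noindent\textbf{Summing the regret.} Writing the pseudoregret as $\sum_{r\ge1}2^{r-1}\bbE[\Delta_{J_{r-1}}]=\sum_{r\ge1}2^{r-1}\sum_{j\ge2}\Delta_j\bbP[J_{r-1}=j]$ (the $r=1$ term being $\le1$ because $J_0$ is uniform), and using $\min(\tfrac1j,a+b)\le\min(\tfrac1j,a)+\min(\tfrac1j,b)$, I would split the contribution of each $\bbP[J_{r-1}=j]$ into a \emph{statistical} part governed by $\min(\tfrac1j,\,2e^{-2^{r-2}\Delta_j^2/c_1})$ and a \emph{privacy} part governed by $\min(\tfrac1j,\,2e^{-2^{r-2}\Delta_j\varepsilon/c_2})$. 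The statistical part, summed over $r$, is exactly the regret of the noiseless doubling follow-the-leader: using that an empirical leader formed from $m$ samples is rarely far from optimal ($\bbE[\Delta]=O(\sqrt{(\log K)/m})$) for the short epochs and concentrates on action $1$ ($O(Ke^{-\Omega(m\Delta_{\min}^2)})$) for the long ones, the standard crossover at $m\asymp(\log K)/\Delta_{\min}^2$ gives $O(\log K/\Delta_{\min})$; this part is unchanged from prior work and does not invoke resampling. The privacy part is where Lemma~\ref{lem:bern} pays off: exchanging the order of summation, for each fixed $j$ the series $\sum_r 2^{r-1}\Delta_j\min(\tfrac1j,\,2e^{-2^{r-2}\Delta_j\varepsilon/c_2})$ is geometric-type with crossover epoch $2^r\asymp(\log j)/(\Delta_j\varepsilon)$, and its two halves are each $O\!\big(\tfrac{\log j}{j\varepsilon}\big)$; summing over $j$ with $\sum_{j=2}^{K}\tfrac{\log j}{j}=O(\log^2 K)$ gives $O(\log^2 K/\varepsilon)$. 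This is precisely the gain over \citet{hu2021near}: without the $1/j$ bound the best per-epoch estimate is $\bbE[\Delta_{J_r}]=O(\log K/(2^r\varepsilon))$, and summing this constant $\log K/\varepsilon$ over the $\Theta(\log T)$ epochs is where their extra $\log T$ comes from. Adding the two parts, and noting the bound is trivial when $T\le\log K/\Delta_{\min}$, yields $O(\log K/\Delta_{\min}+\log^2 K/\varepsilon)$.

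\noindent\textbf{Main obstacle.} The hard part will be Lemma~\ref{lem:bern}: the natural argument only gives a \emph{stochastic} (not almost-sure) ordering of $G_{r,j_1}$ and $G_{r,j_2}$, while the coordinates must stay independent for the binomial structure to survive, and the two-coordinate exchangeability argument sketched above is what reconciles the two. It crucially uses that Bernoulli resampling has made each $G_{r,j}$ a binomial---for general $[0,1]$-valued losses, ordered means need not imply stochastic dominance of the partial sums, so the monotonicity, and with it the entire improvement, would fail. A secondary subtlety is organizing the regret split so that the privacy part really exploits $\bbP[J_r=j]\le 1/j$ while the $\Delta_{\min}$ part still delivers $O(\log K/\Delta_{\min})$ rather than $O(\log^2 K/\Delta_{\min})$, which is why the statistical part must go through the leader-suboptimality/concentration route of prior work rather than the same per-action crossover used for the privacy term.
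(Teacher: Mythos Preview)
Your privacy argument, your monotonicity lemma, and the exponential tail bound are all correct and essentially match the paper. Your treatment of the ``privacy part'' of the regret---using the crossover $2^r\asymp (\log j)/(\Delta_j\varepsilon)$ to get $O((\log j)/(j\varepsilon))$ per action and hence $O((\log^2 K)/\varepsilon)$ in total---is also right (the paper uses $\log K$ rather than $\log j$ at the crossover, which is immaterial).

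The gap is your ``statistical part.'' After the split $\min(\tfrac1j,a_j+b_j)\le\min(\tfrac1j,a_j)+\min(\tfrac1j,b_j)$, that part reads
\[
S\ :=\ \sum_r 2^{r-1}\sum_{j\ge2}\Delta_j\,\min\!\Big(\tfrac{1}{j},\ 2e^{-2^{r-2}\Delta_j^2/c_1}\Big),
\]
and you claim $S$ is ``exactly the regret of the noiseless doubling follow-the-leader'' and therefore $O(\log K/\Delta_{\min})$ via the prior-work route. Neither claim holds. First, since $\bbP^{\mathrm{noiseless}}[J_r=j]\le\min(1/j,a_j)$, $S$ is only an \emph{upper} bound on the noiseless regret---the inequality goes the wrong way. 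Second, $S$ is \emph{not} $O(\log K/\Delta_{\min})$: take all $\Delta_j=\Delta_{\min}$, so $a_j\equiv a_r=2e^{-2^{r-2}\Delta_{\min}^2/c_1}$; for $r$ in the range where $1\lesssim 2^{r-2}\Delta_{\min}^2/c_1\lesssim\log K$ one has $\sum_{j\ge2}\min(1/j,a_r)\asymp\log K-2^{r-2}\Delta_{\min}^2/c_1$, and summing $2^{r-1}$ times this over those $r$ gives $\Theta((\log^2 K)/\Delta_{\min})$. The prior-work route you invoke---$\bbE[\Delta_{J_r}]=O(\sqrt{\log K/m})$ in short epochs---rests on $\sum_j\bbP[J_r=j]=1$; once you have replaced $\bbP[J_r=j]$ by the surrogate $\min(1/j,a_j)$, the corresponding sum can be as large as $\sum_j 1/j\asymp\log K$, and the extra $\log K$ propagates.

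The paper sidesteps this by splitting over \emph{actions} rather than over the two tail terms: $\mathrm{Regret}_\uparrow$ collects $j$ with $\Delta_j\le\varepsilon$ and $\mathrm{Regret}_\downarrow$ collects $j$ with $\Delta_j>\varepsilon$. For $\mathrm{Regret}_\uparrow$ one keeps the \emph{actual} probabilities $\bbP[J_{r-1}=j]$---using $\sum_{j}\bbP[J_{r-1}=j]\le1$ in early epochs and the Hoeffding-only tail $e^{-\Omega(2^r\Delta_j^2)}$ in late epochs (valid since $\min\{\Delta_j,\varepsilon\}=\Delta_j$ there)---and recovers the prior-work $O(\log K/\Delta_{\min})$ via the dyadic grouping in $\Delta_j$. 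For $\mathrm{Regret}_\downarrow$ one then bounds $\bbP[J_{r-1}=j]\le\min\bigl(1/j,\,c_1e^{-\Omega(2^r\Delta_j\varepsilon)}\bigr)$---this is where the monotonicity lemma is spent---and the per-action crossover yields $O((\log^2 K)/\varepsilon)$. The $1/j$ bound is used only on the $\Delta_j>\varepsilon$ side; it is never combined with the Hoeffding tail.
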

\begin{remark}
First, by comparing our new upper bound with the best existing upper bound in Theorem~\ref{thm:eixst}, it improves over existing results when $T>K$ (a small burn-in period).
Notably, we are the first to show that the instant-dependent regret remains constant in $T$ and is only $\log K$ dependent, as the lower bound (Equation~\ref{eq:low}) predicts.
 From the analysis of \citet{hu2021near}, we observe that it was possible to have a upper bound of $O\left(\frac{\log K}{\Delta_{\min}} + \frac{K\log K}{\varepsilon}\right)$ by bounding the number of groups in $K$, which is also $T$-independent.
 However, our bound is much tighter than their analysis, reduces the $T$-independent gap from $K$ to $\log K$.
\end{remark}


The instance-dependent bound from Theorem~\ref{thm:main} further imply a new instance-independent bound.
We present the result in the following theorem; the proof follows the same steps as a similar corollary in~\citet{hu2021near} and we put the proof in Appendix~\ref{sec:app_proof_inst_ind}.
\begin{corollary}[New rate for the instance-independent bound]
\label{cor:inst_ind}
	When specifying $B=1$ and $\calQ_{\varepsilon}$ as the Laplace distribution $\mathrm{Lap}(\frac{2}{\varepsilon})$, the Exponential distribution $\mathrm{Exp}(\frac{1}{\varepsilon})$, or the Gumbel distribution $\mathrm{Gumbel}(\frac{2}{\varepsilon})$, Algorithm~\ref{alg:main} is $\varepsilon$-differentially private and satisfies the gaurantee
	$$
	\mathrm{PseudoRegret}(\text{RNM-FTNL}(B, \calQ_{\varepsilon}); T, \calP_1, \cdots, \calP_K) = O\left(\sqrt{T\log K} + \frac{\log^2 K}{\varepsilon}\right).
	$$
\end{corollary}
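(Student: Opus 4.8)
The plan is to deduce the instance-independent bound from the instance-dependent bound of Theorem~\ref{thm:main} (and Corollary~\ref{cor:more_dist}) by the standard gap-balancing argument, exactly as in the analogous corollary of \citet{hu2021near}. The $\varepsilon$-DP guarantee carries over verbatim, since it is argued per epoch and is independent of $\Delta_{\min}$ and $T$; so only the pseudoregret bound needs work. The starting point is the decomposition $\mathrm{PseudoRegret} = \sum_{t=1}^T \bbE[\Delta_{I_t}]$ from Equation~\ref{eq:regret}, together with the observation that the only obstacle to plugging in $O(\frac{\log K}{\Delta_{\min}} + \frac{\log^2 K}{\varepsilon})$ directly is that the first term blows up as $\Delta_{\min}\to 0$, while the trivial bound $\mathrm{PseudoRegret}\le T$ (using $\Delta_i\le 1$) is too weak for moderate $\Delta_{\min}$. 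If $T < \log K$ the trivial bound already gives $\mathrm{PseudoRegret}\le T \le \sqrt{T\log K}$, so we may assume $T\ge \log K$.

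First I would fix the threshold $\Delta_0 := \sqrt{(\log K)/T}\le 1$ and split into two cases. If $\Delta_{\min}\ge \Delta_0$, then $\frac{\log K}{\Delta_{\min}} \le \sqrt{T\log K}$, and Theorem~\ref{thm:main} immediately yields $\mathrm{PseudoRegret} = O(\sqrt{T\log K} + \frac{\log^2 K}{\varepsilon})$. The remaining work is the case $\Delta_{\min} < \Delta_0$, where the stated bound cannot be used as a black box.

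Second, for the case $\Delta_{\min} < \Delta_0$, I would revisit the intermediate estimate obtained inside the proof of Theorem~\ref{thm:main}, which controls the regret epoch-by-epoch through a quantity of the form $\sum_r 2^{r-1}\sum_{j\ge 2}\Delta_j\,\bbP[J_{r-1}=j]$, where the concentration and monotonicity bounds (in particular $\bbP[J_r=j]\le \tfrac1j$ from Lemma~\ref{lem:bern}) give a per-arm factor that is $\le 1$ always and that decays once the epoch length $2^{r-1}$ exceeds roughly $\Delta_j^{-2}\log K$, plus a privacy contribution of order $\frac{\log^2 K}{\varepsilon}$ aggregated over the $R = O(\log T)$ epochs. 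I would then split each $\Delta_j$ according to whether $\Delta_j\le\Delta_0$ or $\Delta_j>\Delta_0$: the arms with $\Delta_j\le\Delta_0$ contribute at most $T\Delta_0 = \sqrt{T\log K}$ in total (at most $\Delta_0$ per round), while the arms with $\Delta_j>\Delta_0$ contribute, by exactly the computation that produced the $\frac{\log K}{\Delta_{\min}}$ term but now with $\Delta_{\min}$ replaced by $\Delta_0$, at most $O(\frac{\log K}{\Delta_0}) = O(\sqrt{T\log K})$, together with the usual $O(\frac{\log^2 K}{\varepsilon})$. Summing the two sub-cases gives the claimed $O(\sqrt{T\log K} + \frac{\log^2 K}{\varepsilon})$.

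I expect the only real (and minor) obstacle to be bookkeeping: when the sum over suboptimal arms is truncated at $\Delta_0$ rather than at $\Delta_{\min}$, one must verify that the non-private part still telescopes to $O(\log K/\Delta_0)$ rather than picking up a spurious factor of $K$ from summing over arms. This is precisely where the Hedge-style aggregation in the proof of Theorem~\ref{thm:main} — and our $\bbP[J_r=j]\le\frac1j$ monotonicity — must be reused rather than bounding each arm separately; concretely, one replays the same geometric/epoch sum that appears there, simply stopping it at the level where $2^{r-1}\Delta_0^2 \asymp \log K$. Once that step is in place, the rest is arithmetic, and the result follows.
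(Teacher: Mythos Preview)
Your proposal is correct and follows essentially the same argument as the paper: set the critical gap $\Delta_0=\sqrt{(\log K)/T}$, bound the contribution of rounds where a small-gap arm is played by $T\Delta_0=\sqrt{T\log K}$, and rerun the instance-dependent analysis of Theorem~\ref{thm:main}/Corollary~\ref{cor:more_dist} on the remaining arms with $\Delta_0$ playing the role of $\Delta_{\min}$. Two cosmetic remarks: your Case~1 is subsumed by Case~2 (the paper does not split on $\Delta_{\min}$ at all), and the $\bbP[J_r=j]\le 1/j$ bound you cite is Lemma~\ref{lem:monotonicity}, not Lemma~\ref{lem:bern}.
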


\subsection{Proof of Theorem~\ref{thm:main}}
\paragraph{A lemma given by Bernoulli resampling.} We first introduce an important property given by the \textit{Bernoulli resampling}.
For any $j_1<j_2$, i.e. the action $j_1$ has the smaller loss in expectation, suppose $J_r$ is the output of report-noisy-max mechanism (line 16 in Algorithm~\ref{alg:main}) and the Bernoulli resampling will enforce the truth of $\bbP[J_r=j_1]\geq \bbP[J_r=j_2]$.
In simple words, the better action always has larger chance to be selected as the action for the next epoch $r+1$.
Moreover, since $\bbP[J_r=j]\leq \bbP[J_r=j-1] \leq\cdots\leq \bbP[J_r=1]$, we have the upper bound $\bbP[J_r=j]\leq \frac{1}{j}$, which helps us derive a more fine-grained analysis.
We formally state this conclusion in the next lemma.

\begin{lemma}[Monotonicity for bionomial distributions]
\label{lem:monotonicity}
	Suppose $J_r$ is the output from report-noisy-max, as defined at line 16 in Algorithm~\ref{alg:main}. When we specify Algorithm~\ref{alg:main} by $B=1$ and the noise distribution $\calQ_{\varepsilon}$ is $\mathrm{Lap}(\frac{2}{\varepsilon})$, $\mathrm{Exp}(\frac{1}{\varepsilon})$ or $\mathrm{Gumbel}(\frac{2}{\varepsilon})$. For any $r\geq 1$ and $j_1 < j_2$, $\bbP[J_r = j_1]\geq \bbP[J_r = j_2]$. Moreover, $\bbP[J_r = j]\leq \frac{1}{j}$.
\end{lemma}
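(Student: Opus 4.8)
The plan is to record what Bernoulli resampling does to $G_r$, push the resulting stochastic ordering through the report-noisy-max map, and then read off the $\tfrac1j$ bound by a counting argument. Epoch $r$ consists of $n_r := 2^{r-1}$ rounds; after resampling, each coordinate $\tilde\ell^{(t)}_j$ is a $\{0,1\}$ variable with mean $\bbE[\ell^{(t)}_j] = \mu_j$, i.e. $\tilde\ell^{(t)}_j \sim \calB(\mu_j)$, and the whole family $\{\tilde\ell^{(t)}_j\}_{t,j}$ is mutually independent (the loss vectors come from a product distribution and the resampling is coordinatewise independent). Hence $G_{r,j} = \sum_{t=2^{r-1}}^{2^r-1}\tilde\ell^{(t)}_j \sim \mathrm{Binomial}(n_r,\mu_j)$, independently across $j\in[K]$. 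Since $\mu_{j_1}\le\mu_{j_2}$, the trial-by-trial monotone coupling of binomials gives $F_{G_{r,j_1}}\ge F_{G_{r,j_2}}$ pointwise (the fact quoted in the technical overview), i.e. $G_{r,j_1}$ is stochastically dominated by $G_{r,j_2}$.

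Next I would reduce the statement to a two-variable comparison. Write $A := -G_{r,j_1} + Q_{r,j_1}$ and $B := -G_{r,j_2} + Q_{r,j_2}$. Since the noises are i.i.d.\ and continuous (Laplace, exponential, or Gumbel), $A$ and $B$ are independent with continuous CDFs, and convolving the stochastically ordered $-G_{r,j_1}, -G_{r,j_2}$ with the common noise distribution preserves the order, so $F_A\le F_B$ pointwise. Condition on $(G_{r,j},Q_{r,j})$ for every $j\notin\{j_1,j_2\}$, and let $M$ be the resulting constant $\max_{j\notin\{j_1,j_2\}}(-G_{r,j}+Q_{r,j})$. The $K$ scores $-G_{r,i}+Q_{r,i}$ are almost surely distinct (a coincidence would force a continuous difference of two noises to equal an integer), so conditionally $J_r=j_1$ exactly when $A>\max(M,B)$ and $J_r=j_2$ exactly when $B>\max(M,A)$, with $A$ and $B$ still independent. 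It therefore suffices to prove $\bbP[A>\max(M,B)]\ge\bbP[B>\max(M,A)]$ for every value of $M$; averaging over the conditioning then yields $\bbP[J_r=j_1]\ge\bbP[J_r=j_2]$.

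For the inequality itself I would couple both probabilities onto one space. Draw independent uniforms $U_1,U_2$ on $(0,1)$ and set $X := F_A^{-1}(U_1)$, $X' := F_B^{-1}(U_1)$, $W := \max(M,F_B^{-1}(U_2))$, $W' := \max(M,F_A^{-1}(U_2))$. Then $X\sim A$ is independent of $W\sim\max(M,B)$, so $\bbP[X>W]=\bbP[A>\max(M,B)]$, and likewise $\bbP[X'>W']=\bbP[B>\max(M,A)]$. Because $F_A\le F_B$ pointwise implies $F_A^{-1}\ge F_B^{-1}$ pointwise (generalized inverses), we get $X\ge X'$ and $W\le W'$ almost surely, hence $\{X'>W'\}\subseteq\{X>W\}$ and the inequality follows. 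This double coupling --- monotone in the \emph{score} coordinate $X$ and anti-monotone in the \emph{threshold} coordinate $W$, while keeping the two coordinates independent within each space --- is the crux, and I expect getting it exactly right (in particular the independence and the direction of each monotone coupling) to be the main obstacle; the rest is bookkeeping.

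Finally, the argmax is almost surely unique, so $\sum_{i=1}^K\bbP[J_r=i]=1$, and applying the monotonicity to all pairs $i<j$ gives $\bbP[J_r=1]\ge\bbP[J_r=2]\ge\cdots\ge\bbP[J_r=K]$. Hence $j\cdot\bbP[J_r=j]\le\sum_{i=1}^j\bbP[J_r=i]\le1$, that is $\bbP[J_r=j]\le\tfrac1j$, which completes the proof.
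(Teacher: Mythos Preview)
Your proof is correct and follows essentially the same approach as the paper: establish the stochastic ordering $F_{A}\le F_{B}$ for the two noisy scores via the binomial CDF monotonicity, push it through to the argmax against the remaining coordinates, and then read off $\bbP[J_r=j]\le 1/j$ by summing. The only difference is in the technical device for the two-variable comparison---where the paper introduces an independent copy $N'_{r,j_2}$ of $N_{r,j_2}$ and uses the exchangeability $\bbP[N'_{r,j_2}>\max\{N_{r,j_2},H\}]=\bbP[N_{r,j_2}>\max\{N'_{r,j_2},H\}]$ as a bridge, you condition on the other scores and run a direct quantile coupling; both are standard and equally clean.
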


The proof follows the steps
\begin{enumerate}[leftmargin=*,nosep]
\item With the Bernoulli resampling, the accumulated loss $G_{r, j}$ has the binomial distribution $\calB(2^{r-1}, \mu_j)$. Binomial distribution has the property~(\citet{wadsworth1960introduction}; Appendix~\ref{sec:app_bern})
	$$\mu_{j_1}\leq\mu_{j_2}\Rightarrow \forall x, F_{G_{r, j_1}}(x) \geq F_{G_{r, j_2}}(x).$$
	where $F_{A}(x)$ is denoted as the cumulative density function for any random variable $\bbP[A\leq x]$.
\item Denote $N_{r, j} := -G_{r, j}+Q_{r, j}$. Because $Q_{r, j}$ share the same distribution $\calQ_{\varepsilon}$ and $ F_{G_{r, j_1}}(x) \geq F_{G_{r, j_2}}(x)$ from the step 1, we can prove that $F_{N_{r, j_1}}(x) \leq F_{N_{r, j_2}}(x)$.
\item Let $H = \max_{j\neq j_1, j_2}N_{r, j}$. By applying $F_{N_{r, j_1}}(x) \leq F_{N_{r, j_2}}(x)$ from the step 2, we can prove 
$$
\bbP[J_r = j_1] = \bbP[N_{r, j_1}> \max\{N_{r, j_2}, H\}] \geq \bbP[N_{r, j_2}> \max\{N_{r, j_1}, H\}] = \bbP[J_r = j_2].
$$
\end{enumerate}
We put the full proof in the Appendix~\ref{sec:app_monotonicity}.

\paragraph{Proof sketch of Theorem~\ref{thm:main}.} Now we show the proof sketch for Theorem~\ref{thm:main} by omitting some calculations that are similar to the proof in \citet{hu2024open}; the complete proof is in Appendix~\ref{sec:app_proof_main}.
\begin{proof}[Proof sketch of Theorem~\ref{thm:main}.]
The Algorithm~\ref{thm:main} is $\varepsilon$-differentially private as discussed at the beginning of this section. Next, we are going to bound the pseudoregret.
If we can prove Equation~\ref{eq:regret_main} for any $T:=2^{R}-1$ where $R$ is any non-negative integer, Equation~\ref{eq:regret_main} would also hold for arbitrary $T$, because Algorithm~\ref{alg:main} is independent of the $T$ and the regret of Algorithm~\ref{alg:main} is non-decreasing in $T$.
Therefore, we can assume $T:=2^{R+1}-1$ for some non-negative integer $R$ and can rewrite the pseudoregret (defined in Eqeation~\ref{eq:regret}) according to the Algorithm~\ref{alg:main}:
$$
\sum_{t=1}^T \bbE\left[\Delta_{I_t}\right] = \sum_{r=1}^{R} \sum_{t=2^{r-1}}^{2^{r}-1}\sum_{j=1}^{K}\Delta_{j}\bbP[J_{r-1} = j] = \sum_{j=1}^{K}\Delta_{j}\sum_{r=1}^{R} 2^{r-1}\bbP[J_{r-1} = j]
$$
$$ 
 = \underbrace{\sum_{j:\Delta_j\leq \varepsilon}\Delta_{j}\sum_{r=1}^{R} 2^{r-1}\bbP[J_{r-1} = j]}_{\text{Regret}_{\uparrow}} + \underbrace{\sum_{j:\Delta_j > \varepsilon}^{K}\Delta_{j}\sum_{r=1}^{R} 2^{r-1}\bbP[J_{r-1} = j]}_{\text{Regret}_{\downarrow}}
$$

$\text{Regret}_{\uparrow}\leq O\left(\frac{\log K}{\Delta_{\min}}\right)$ can be shown in the same way as a part of proof for Theorem 9 in ~\citet{hu2021near}.
The remaining is to focus on bounding $\text{Regret}_{\downarrow}$, where Lemma~\ref{lem:monotonicity} is applied.

According to the Lemma 9 in \citet{hu2021near} and  Lemma~\ref{lem:more_dist} (in Appendix), for all three noise distributions $\calQ_{\varepsilon}$, there exists universal constants $c_1, c_2>0$ s.t.
	\begin{align}
	\label{eq:tail_main}
	\bbP\left[J_r = j\right]\leq c_1\cdot \exp(-2^{r+1}\Delta_j \min\{\Delta_j, \varepsilon\} / c_2).
	\end{align}
With this property and a similar proof for Theorem 24 in \citet{hu2021near},  for $\Delta_j, \varepsilon > 0$ and $r(j)\in \bbN$, 
\begin{align}
\label{eq:tail_sum}
	\sum_{r=r(j)+1}^{R}2^{r-1}\bbP\left[J_{r-1} = j\right] 
	&< \frac{c_1c_2}{\Delta_j \min\{\Delta_j, \varepsilon\}}\cdot \exp(-2^{r(j)}\Delta_j \min\{\Delta_j, \varepsilon\} / c_2).
\end{align}

Let $r(j) = \left\lceil \log_2\left( \frac{c_2(\log K)}{\Delta_j\varepsilon}\right) \right\rceil$. $\forall j$ such that $\Delta_j > \varepsilon$, $\sum_{r=1}^{R} 2^{r-1}\bbP[J_{r-1} = j]$ can be bounded as
$$
\sum_{r=1}^{r(j)} 2^{r-1}\bbP[J_{r-1} = j] +\sum_{r=r(j)+1}^{R} 2^{r-1}\bbP[J_{r-1} = j] 
< \left(\sum_{r=1}^{r(j)} 2^{r-1}\frac{1}{j}\right) + \frac{c_1c_2}{\Delta_j \varepsilon}\cdot \exp(-2^{r(j)}\Delta_j \varepsilon / c_2)
$$
$$
< 2^{r(j)}\frac{1}{j} + \frac{c_1c_2}{\Delta_j \varepsilon}\cdot \exp(-2^{r(j)}\Delta_j \varepsilon / c_2) \\
\leq \frac{2 c_2}{\Delta_j \varepsilon}\cdot \frac{\log K}{j} + \frac{c_1c_2}{\Delta_j \varepsilon}\cdot \frac{1}{K},
$$
where the first inequality holds by Lemma~\ref{lem:monotonicity} (\emph{since it is assumed that $B=1$ for the Algorithm~\ref{alg:main} in the theorem statement}) and Equation~\ref{eq:tail_sum}, the second inequality holds by $\sum_{r=1}^{r(j)} 2^{r-1} = 2^{r(j)}-1 < 2^{r(j)}$, and the third inequality holds by taking the value of $r(j)$.
Therefore,

$$
\text{Regret}_{\downarrow} < \sum_{j:\Delta_j > \varepsilon}^{K}\Delta_{j}\left(\frac{2 c_2}{\Delta_j \varepsilon} \frac{\log K}{j} + \frac{c_1c_2}{\Delta_j \varepsilon}\frac{1}{K}\right)
\leq  \frac{2 c_2}{\varepsilon}\sum_{j:\Delta_j > \varepsilon}^{K}\left(\frac{\log K}{j} + \frac{c_1}{K}\right) = O\left( \frac{\log^2 K}{\varepsilon} \right).
$$

By putting the analysis for $\text{Regret}_{\uparrow}$ and $\text{Regret}_{\downarrow}$ together, we have proved that the pseudoregret is bounded by $O\left(\frac{\log K}{\Delta_{\min}} + \frac{\log^2 K}{\varepsilon}\right)$.
\end{proof}

\section{Optimal Rate for a Weaker Deterministic Setting of the Open Problem}
\label{sec:main_det}
We notice that for both the existing result and our new result, the gap the the known lower bound appears with the DP factor $\varepsilon$ rather than the $\Delta_{\min}$.
This motivates us to study a weaker setting of the open problem to focus on differential privacy regardless of the sampling error in the observed losses.
Specifically, we study the same open problem but with the assumption that the distributions $\calP_j$ ($j\in [K]$) concentrate on the single value $\mu_j$, i.e. $\bbP_{\ell_j\sim \calP_j}[\ell_j=\mu_j] = 1$, and we call this weaker setting as the \textit{deterministic setting}.

Although the deterministic setting may seem limited in practical relevance, it provides a simplified environment to isolate and examine key challenges in the original problem.
Interestingly, a direct application of existing analyses and algorithms from the original setting still results in the same extra $\log K$ factor, suggesting that the underlying complexity persists.
Motivated by this, we undertake a deeper investigation of this weaker setting.


\subsection{Lower bound for the deterministic setting} 
Although \cite{hu2021near} has shown the lower bound $\Omega\left(\frac{\log K}{\varepsilon}\right)$ for the original open problem, their result does not directly apply to the deterministic setting, as the worst-case instance they construct falls outside this weaker setting.
Their construction relies on stochasticity and does not satisfy the assumptions of the deterministic case.
Instead, we develop a new lower bound instance that achieves the same rate of $\frac{\log K}{\varepsilon}$, and it applies to both the deterministic setting and the original problem.
The lower bound result is stated in the following theorem.
\begin{theorem}[Lower bound for the deterministic setting.]
\label{thm:lower_det}
	For any $\varepsilon$-differentially private online learning algorithm $\calM$, $K\in \bbN$ and $\Delta_{\min}$, $\exists (u_1, \cdots, u_K)\in [0, 1]^{K}$ s.t. at the deterministic setting,
	$$\mathrm{PseudoRegret}(\calM; T, \calP_1, \cdots, \calP_K) \geq c_1\frac{\log K}{\varepsilon},$$
	where $c_1$ is a universal constant independent of $K$,$\varepsilon$ and $(\mu_1, \cdots, \mu_k)$.
	Moreover, the sorted $(u_{(1)}, \cdots, u_{(K)})$ in the worst instance construction is $(0, \Delta_{\min}, \Delta_{\min}, 1, \cdots, 1)$
\end{theorem}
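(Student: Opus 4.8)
The plan is a packing argument over $K$ ``needle'' instances combined with group privacy, following the standard template for lower bounds on private selection. For each $j^\star\in[K]$, define the instance $\calI_{j^\star}$ by $\mu_{j^\star}=0$ and $\mu_j=\Delta:=\tfrac12$ for $j\neq j^\star$; then all means lie in $[0,1]$ and the unique optimal action of $\calI_{j^\star}$ is $j^\star$ with gap $\Delta_{\min}=\Delta$. Since we are in the deterministic setting, under $\calI_{j^\star}$ the observed loss vector equals the same constant vector $v^{(j^\star)}\in\{0,\Delta\}^K$ (all coordinates $\Delta$ except a single $0$ in coordinate $j^\star$) at every round, and the per-round pseudoregret is exactly $\Delta\cdot\bbP[I_t\neq j^\star\mid\calI_{j^\star}]$. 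Write $R_{j^\star}$ for the pseudoregret of $\calM$ under $\calI_{j^\star}$. I also introduce an auxiliary reference instance $\calI_0$ whose constant loss vector is the all-$\Delta$ vector; it is used only inside the privacy step (and need not respect the WLOG ordering of means, since $\varepsilon$-DP holds for all loss sequences). It then suffices to show that, averaged over $j^\star$, the learner cannot place $I_t$ on $j^\star$ during the first $\Theta(\log K/\varepsilon)$ rounds.

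The key step is a group-privacy estimate. In the protocol the learner commits to $I_\tau$ before observing $\ell^{(\tau)}$, so the event $\{I_\tau=j^\star\}$ is a measurable function of $\calM(\ell^{(1:\tau-1)})$. The length-$(\tau-1)$ loss-vector sequences produced by $\calI_{j^\star}$ and by $\calI_0$ differ in each of the $\tau-1$ rounds (in coordinate $j^\star$ only), so chaining the $\varepsilon$-DP inequality across $\tau-1$ single-vector changes gives
\[
\bbP[I_\tau=j^\star\mid\calI_{j^\star}]\ \le\ e^{(\tau-1)\varepsilon}\,\bbP[I_\tau=j^\star\mid\calI_0].
\]
Summing over $j^\star\in[K]$ and using $\sum_{j^\star}\bbP[I_\tau=j^\star\mid\calI_0]=1$ gives $\sum_{j^\star=1}^{K}\bbP[I_\tau=j^\star\mid\calI_{j^\star}]\le e^{(\tau-1)\varepsilon}$. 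Hence, for every $\tau\le T_0:=\big\lfloor 1+\varepsilon^{-1}\log(3K/4)\big\rfloor$, we get $\tfrac1K\sum_{j^\star}\bbP[I_\tau=j^\star\mid\calI_{j^\star}]\le\tfrac34$, equivalently $\tfrac1K\sum_{j^\star}\bbP[I_\tau\neq j^\star\mid\calI_{j^\star}]\ge\tfrac14$.

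To conclude I would average over the $K$ instances and exchange the order of summation:
\[
\frac1K\sum_{j^\star=1}^{K} R_{j^\star}\ =\ \Delta\sum_{\tau=1}^{T}\Big(1-\tfrac1K\sum_{j^\star=1}^{K}\bbP[I_\tau=j^\star\mid\calI_{j^\star}]\Big)\ \ge\ \Delta\sum_{\tau=1}^{T_0}\tfrac14\ =\ \frac{\Delta\,T_0}{4},
\]
which uses $T\ge T_0$ — the only regime in which the claimed bound is consistent, since the pseudoregret is always at most $\Delta T$. For $K\ge2$ one checks $\log(3K/4)\ge\tfrac12\log K$, so $T_0\ge\varepsilon^{-1}\log(3K/4)\ge\tfrac12\,\varepsilon^{-1}\log K$, and therefore the average — hence the maximum over $j^\star$ — of $R_{j^\star}$ is at least $\tfrac{1}{16}\cdot\tfrac{\log K}{\varepsilon}$; the case $K=1$ is trivial since $\log 1=0$. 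This gives the theorem with the universal constant $c_1=\tfrac1{16}$.

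The step I expect to need the most care is the group-privacy bound: getting the exponent to be a constant multiple of $\tau$ rather than of $T$ relies on (i) $I_\tau$ depending only on $\ell^{(1:\tau-1)}$, (ii) two distinct deterministic loss vectors counting as a single differing vector no matter how many coordinates they disagree in, and (iii) choosing the reference instance $\calI_0$ so that the $K$ resulting inequalities can be summed against a probability distribution. A different index convention for ``$\calM(\ell^{(1:t)})$'', or handling the large-$\varepsilon$ regime where the round-$\tau=1$ bound already suffices, would only affect the constant. The remaining ingredient — the averaging trick converting ``for every round there is a bad instance'' into ``there is a single instance that is bad on $\Omega(\log K/\varepsilon)$ rounds'' — is routine.
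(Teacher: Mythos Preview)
Your proof is correct and follows essentially the same packing-plus-group-privacy template as the paper's own argument: both construct $K$ ``needle'' instances with a single zero coordinate, apply group privacy over the first $t$ rounds to bound $\bbP[I_t=j^\star\mid\calI_{j^\star}]$, average over $j^\star$, and conclude that the learner must err for $\Omega(\varepsilon^{-1}\log K)$ rounds on some instance. The only cosmetic differences are that you compare each instance to a common all-$\Delta$ reference $\calI_0$ (letting the right-hand sides sum directly to $1$) whereas the paper compares instances pairwise and symmetrizes, and you truncate at a threshold $T_0$ while the paper bounds the sum by an integral; neither changes the substance.
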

\begin{remark}
While the lower bound rate remains the same as in prior work, the proof is simpler and more self-contained, without needing to use an interesting DP version of Fanos' method~\citep{acharya2021differentially} that apparently overkill the problem.	
\end{remark}
\begin{remark}[Connection between Theorem \ref{thm:lower_det} and open problem]
A key mystery in prior work is that the worst-case instance for the non-private version of the problem is given by $(u_{(1)}, \cdots, u_{(K)})=(0, \Delta_{\min}, \cdots, \Delta_{\min})$, where all suboptimal actions share the same loss distributions.
However, this instance does not correspond to the hardest case (to come up with an analysis) for the private version of the problem. -- prior analysis in \cite{hu2021near} has proved the tightest rate $\Theta\left(\frac{\log K}{\Delta_{\min}} + \frac{\log K}{\varepsilon}\right)$ for this particular instance, but not able to prove this rate for all instances.
Our new lower bound construction sheds light on this discrepancy: the set of instances exhibiting regret of at least $\Omega(\frac{\log K}{\varepsilon})$ under privacy constraints is broader than the hardest instances in the non-private setting.
As a result, establishing the $O(\frac{\log K}{\varepsilon})$ term in the upper bound for all instances is inherently more difficult than analyzing just the worst-case instance in the non-private regime.
\end{remark}

\subsection{Upper bound for the deterministic setting}
\paragraph{Suboptimal results by extending the analysis for the original open problem.} First, we will see how the algorithm together with the similar analysis in Section~\ref{sec:main} works for the deterministic setting.
We can repeat the analysis in Theorem~\ref{thm:main} without considering the sampling errors, and get the following rate as a corollary; the detailed argument is in Appendix~\ref{sec:app_ext_det}.
\begin{corollary}[Extension from Theorem~\ref{thm:main}.]
\label{cor:ext_det}
	When $\calQ_{\varepsilon}$ is the Laplace distribution $\mathrm{Lap}(\frac{2}{\varepsilon})$, the Exponential distribution $\mathrm{Exp}(\frac{1}{\varepsilon})$, or the Gumbel distribution $\mathrm{Gumbel}(\frac{2}{\varepsilon})$, Algorithm~\ref{alg:main} is $\varepsilon$-differentially private and satisfies the gaurantee for the deterministic setting:
	\begin{align*}
		\mathrm{PseudoRegret}(\text{RNM-FTNL}(B, \calQ_{\varepsilon}); T, \calP_1, \cdots, \calP_K) = O\left(\frac{\log^2 K}{\varepsilon}\right)
	\end{align*}
\end{corollary}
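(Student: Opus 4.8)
The plan is to run Algorithm~\ref{alg:main} with $B=0$ (no Bernoulli resampling) and any of $\calQ_{\varepsilon}\in\{\mathrm{Lap}(2/\varepsilon),\mathrm{Exp}(1/\varepsilon),\mathrm{Gumbel}(2/\varepsilon)\}$, and to re-run the argument of Theorem~\ref{thm:main} while exploiting that, in the deterministic setting, $G_{r,j}=2^{r-1}\mu_j$ is a deterministic constant. Privacy is immediate: for each of the three noise mechanisms, $J_r$ is $\varepsilon$-DP with respect to the block of loss vectors of epoch $r$, and the blocks are disjoint, exactly as discussed in Section~\ref{sec:main}. For the regret, writing $T=2^{R+1}-1$ as in the proof of Theorem~\ref{thm:main}, it suffices to bound $\sum_{j=1}^{K}\Delta_j\sum_{r=1}^{R}2^{r-1}\bbP[J_{r-1}=j]$.

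Two ingredients drive the improvement. First, the monotonicity $\bbP[J_r=j_1]\ge\bbP[J_r=j_2]$ for $j_1<j_2$, hence $\bbP[J_r=j]\le 1/j$, still holds: although Lemma~\ref{lem:monotonicity} is stated for $B=1$, its proof uses only the stochastic-dominance fact $F_{G_{r,j_1}}(x)\ge F_{G_{r,j_2}}(x)$ whenever $\mu_{j_1}\le\mu_{j_2}$, which is trivially true for the point masses $G_{r,j}=2^{r-1}\mu_j$ (the two cumulative distribution functions are step functions jumping at $2^{r-1}\mu_{j_1}\le 2^{r-1}\mu_{j_2}$), and the rest of that proof goes through verbatim. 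Second --- and this is where the deterministic assumption does the real work --- the tail of $\bbP[J_r=j]$ improves: since $\bbP[J_r=j]\le\bbP[-G_{r,j}+Q_{r,j}\ge-G_{r,1}+Q_{r,1}]=\bbP[Q_{r,j}-Q_{r,1}\ge 2^{r-1}\Delta_j]$, and for each of the three families the difference $Q_{r,j}-Q_{r,1}$ has a clean exponential upper tail with rate $\Theta(\varepsilon)$ (the difference of two i.i.d.\ Gumbels is logistic; the difference of two i.i.d.\ exponentials integrates directly to $\tfrac12 e^{-\varepsilon t}$; the Laplace case is standard), there are universal constants $C_1,C_2>0$ with $\bbP[J_r=j]\le C_1\exp(-C_2\,2^{r}\Delta_j\varepsilon)$. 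The key point is that the exponent scales like $\Delta_j\varepsilon$, with no second $\Delta_j$ factor, in contrast to the stochastic-setting bound $\exp(-\Theta(2^{r}\Delta_j\min\{\Delta_j,\varepsilon\}))$ of Lemma~9 of \citet{hu2021near}; this is exactly what lets us drop the $\Delta_j\le\varepsilon$ versus $\Delta_j>\varepsilon$ case split and, with it, the $\log K/\Delta_{\min}$ term.

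The remaining computation mirrors the $\mathrm{Regret}_{\downarrow}$ estimate in the proof of Theorem~\ref{thm:main}, now applied to every action rather than only to $\Delta_j>\varepsilon$. For each $j$ set $r(j)=\lceil\log_2(\tfrac{\log K}{C_2\Delta_j\varepsilon})\rceil$ and split the inner sum at $r(j)$: for $r\le r(j)$ use $\bbP[J_{r-1}=j]\le 1/j$ to get $\sum_{r\le r(j)}2^{r-1}/j\le 2^{r(j)}/j=O(\tfrac{\log K}{j\Delta_j\varepsilon})$; for $r>r(j)$ use the tail bound and sum the fast-decaying geometric-type series (the same estimate used ``as in Theorem~24 of \citet{hu2021near}'' inside the proof of Theorem~\ref{thm:main}) to get $O(\tfrac{1}{\Delta_j\varepsilon})\exp(-C_2 2^{r(j)}\Delta_j\varepsilon)=O(\tfrac{1}{\Delta_j\varepsilon K})$. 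Multiplying by $\Delta_j$ and summing over $j\in[K]$ (with the $r=1,2$ epochs, where $J_0$ is uniform, contributing an extra $O(1)$) gives $\sum_{j=1}^{K}\bigl(O(\tfrac{\log K}{j\varepsilon})+O(\tfrac{1}{\varepsilon K})\bigr)=O(\tfrac{\log^2 K}{\varepsilon})$, as claimed.

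The only genuinely new work beyond Theorem~\ref{thm:main} is the improved tail bound, so the main (and fairly minor) obstacle is to check uniformly over the three noise distributions that $Q_{r,j}-Q_{r,1}$ has an exponential upper tail with rate proportional to $\varepsilon$, and to note that the proof of Lemma~\ref{lem:monotonicity} still applies once $G_{r,j}$ degenerates to a constant; everything else is a transcription of the $\mathrm{Regret}_{\downarrow}$ calculation from the proof of Theorem~\ref{thm:main}.
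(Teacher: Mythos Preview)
Your proposal is correct and follows essentially the same approach as the paper's proof sketch: replace the stochastic tail bound $\bbP[J_r=j]\le c_1\exp(-2^{r}\Delta_j\min\{\Delta_j,\varepsilon\}/c_2)$ by the deterministic-setting bound $\bbP[J_r=j]\le c_1\exp(-2^{r}\Delta_j\varepsilon/c_2)$, and then rerun the $\mathrm{Regret}_\downarrow$ computation from Theorem~\ref{thm:main} across all actions. Your explicit remarks that the monotonicity argument of Lemma~\ref{lem:monotonicity} carries over when $G_{r,j}$ degenerates to a point mass, and that the $\Delta_j\le\varepsilon$ case (hence the $\log K/\Delta_{\min}$ term) can be dropped, make precise exactly what the paper's sketch leaves implicit.
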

Unfortunately, by comparing the rate with the lower bound, there is still an extra log factor in $K$, same as what it is for the original open problem.

\paragraph{The tight upper bound for the deterministic setting.} 
We first provide an algorithm in the same framework of variants of RNM-FTNL.
The algorithm is with a new specification of $B=0$ and $Q_{\varepsilon}$ as Exponential distribution or Gumbel distribution in Algorithm~\ref{alg:main}.
Notice that we are not sticking with the Laplace distribution that is used in the original RNM-FTNL \citep{hu2021near} for this setting.
This is because the report-noisy-max mechanism with Gumbel nosie is known as Exponential mechanism~\citep{qiao2021oneshot}, which has explicit forms for the probability of each action as an output.
The tractable expression of the regret is soft-max like and let us to derive our tight analysis.
In addition, we can make a similar conclusion for the Exponential distribution by a reduction since the previous study~\citep{mckenna2020permute} shows it is consistently better than the Gumbel distribution.
Nevertheless, we are not able to prove the same rate for the Laplace distribution, and whether it brings the same rate remains unknown.

With this new algorithm, we are able to prove the optimal rate for the deterministic setting, as stated in the following theorem.
\begin{theorem}[Main result 2: optimal rate for the deterministic setting.]
\label{thm:upper_det}
	When specifying $B=0$ and $\calQ_{\varepsilon}$ as the Exponential distribution $\mathrm{Exp}(\frac{1}{\varepsilon})$ or the Gumbel distribution $\mathrm{Gumbel}(\frac{2}{\varepsilon})$, Algorithm~\ref{alg:main} is $\varepsilon$-differentially private and satisfies the guarantee for the deterministic setting
	\begin{align*}
		\mathrm{PseudoRegret}(\text{RNM-FTNL}(B, \calQ_{\varepsilon}); T, \calP_1, \cdots, \calP_K) = O\left( \frac{\log K}{\varepsilon}\right).
	\end{align*}
	Moreover, this rate is optimal for the deterministic setting.
\end{theorem}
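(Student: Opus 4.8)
The plan is to split the statement into three parts. Privacy was already argued at the beginning of Section~\ref{sec:main} (each $J_r$ is $\varepsilon$-DP with respect to the disjoint batch of loss vectors in its epoch, and the batches are disjoint), and optimality is exactly the matching lower bound of Theorem~\ref{thm:lower_det}. So the entire content is the upper bound $O(\log K/\varepsilon)$ on the pseudoregret in the deterministic setting, and my proposal is to obtain it from an \emph{exact} expression for the regret plus a short calculus argument, after which I will transfer the bound from Gumbel noise to exponential noise by a black-box reduction.

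First I would specialize Algorithm~\ref{alg:main} with $B=0$ to the deterministic setting: every received loss vector in epoch $r$ equals $(\mu_1,\dots,\mu_K)$, hence $G_{r,j}=2^{r-1}\mu_j$ deterministically. Using the equivalence between report-noisy-max with $\mathrm{Gumbel}(\tfrac2\varepsilon)$ noise and the exponential mechanism \citep{qiao2021oneshot}, the additive constant $\mu_1$ cancels and
$\bbP[J_r=j]=e^{-\frac{\varepsilon}{2}2^{r-1}\Delta_j}\big/\sum_{i}e^{-\frac{\varepsilon}{2}2^{r-1}\Delta_i}$. Substituting this into Equation~\ref{eq:regret}, peeling off the first couple of epochs (which contribute $O(1)$, absorbed into $O(\log K/\varepsilon)$ in the relevant regime $\varepsilon=O(\log K)$), the pseudoregret equals
$$O(1)+\sum_{r\ge 3}2^{r-1}\,g(2^{r-2}),\qquad g(x):=\frac{\sum_{j}\Delta_j e^{-\frac{\varepsilon}{2}x\Delta_j}}{\sum_{i}e^{-\frac{\varepsilon}{2}x\Delta_i}} .$$

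The key observation is that $g$ is a logarithmic derivative. Writing $f(x):=\sum_{i}e^{-\frac{\varepsilon}{2}x\Delta_i}$, we have $g(x)=-\tfrac{2}{\varepsilon}\,\tfrac{d}{dx}\ln f(x)$, and differentiating once more gives $g'(x)=-\tfrac{\varepsilon}{2}\,\mathrm{Var}_{J\sim p_x}(\Delta_J)\le 0$, so $g$ is non-increasing on $[0,\infty)$. Therefore $2^{m}g(2^{m})\le 2\int_{2^{m-1}}^{2^{m}}g(x)\,dx$ for $m\ge 1$ and $g(1)\le \int_0^1 g(x)\,dx$, and since the dyadic arguments $2^{r-2}$ for $r\ge 3$ are exactly $\{2^m:m\ge 1\}$ (and $2^{r-1}=4\cdot 2^{r-2}$), summing gives $\sum_{r\ge 3}2^{r-1}g(2^{r-2})\le 4\int_{0}^{\infty}g(x)\,dx$. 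Finally the integral telescopes: $\int_{0}^{\infty}g(x)\,dx=\tfrac{2}{\varepsilon}\bigl(\ln f(0)-\lim_{x\to\infty}\ln f(x)\bigr)=\tfrac{2}{\varepsilon}(\ln K-\ln 1)=\tfrac{2\ln K}{\varepsilon}$, using $f(0)=K$ and $f(\infty)=1$ because $\Delta_1=0$ is the unique zero gap (if the optimum is not unique this only makes $f(\infty)$ larger, so the bound still holds). This yields pseudoregret $O(\log K/\varepsilon)$ for the Gumbel variant. For the $\mathrm{Exp}(\tfrac1\varepsilon)$ variant I would not repeat the calculation: report-noisy-max with exponential noise is the permute-and-flip mechanism, whose error distribution is stochastically dominated by that of the exponential mechanism \citep{mckenna2020permute}; since the per-epoch regret contribution is the monotone functional $\bbE[\Delta_{J_r}]$ of that error distribution, the exponential-noise pseudoregret is no larger, hence also $O(\log K/\varepsilon)$. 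Optimality is then immediate from Theorem~\ref{thm:lower_det}.

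I expect the main obstacle to be bookkeeping rather than depth. The delicate points are: getting the multiplicative constant in the exponential-mechanism identity right (here $\varepsilon/2$, matching $\mathrm{Gumbel}(\tfrac2\varepsilon)$) and noting it holds conditionally on $G_r$, which in the deterministic setting is degenerate so $\bbP[J_r=j]$ has the stated closed form unconditionally; verifying that the permute-and-flip domination result is of the right shape to bound $\bbE[\Delta_{J_r}]$ rather than only a high-probability error; and the Riemann-sum comparison between the dyadic sum and $\int_0^\infty g$, which needs the monotonicity of $g$. The calculus core — recognizing $g$ as $-\tfrac{2}{\varepsilon}(\ln f)'$ so that the series collapses to exactly $\ln K$ — is the short, clean heart of the argument once that structure is spotted.
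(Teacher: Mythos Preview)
Your proposal is correct and follows the paper's proof closely: both use the exponential-mechanism closed form for $\bbP[J_r=j]$ in the deterministic setting, reduce the dyadic regret sum to an integral of a softmax-type function, evaluate that integral exactly as $\Theta((\log K)/\varepsilon)$ via the logarithmic-derivative identity, and then transfer the bound from Gumbel to exponential noise by the \citet{mckenna2020permute} comparison. The one technical difference is in the sum-to-integral step. The paper works with $f(x)=\frac{\sum_i 2^x a_i e^{-2^x a_i}}{\sum_i e^{-2^x a_i}}$ (the dyadic scaling is absorbed into the argument), proves the differential inequality $f'(x)\le(\log 2)f(x)$ via Cauchy--Schwarz (Lemma~\ref{lem:derivative}), and combines it with the mean-value theorem to obtain $\sum_r f(r)\le(1+\log 2)\int_0^\infty f$ (Lemma~\ref{lem:calc}); you instead keep the unscaled expectation $g$, observe $g'(x)=-\tfrac{\varepsilon}{2}\,\mathrm{Var}_{p_x}(\Delta_J)\le 0$, and use plain monotonicity over dyadic intervals. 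Your route is a touch more elementary and yields the same bound up to constants. One bookkeeping slip to fix: $2^{r-1}=2\cdot 2^{r-2}$, not $4\cdot 2^{r-2}$; your final factor of $4$ is nonetheless correct, since $2^{r-1}g(2^{r-2})=2\cdot 2^m g(2^m)\le 2\cdot 2\int_{2^{m-1}}^{2^m}g$.
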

We show two lemmas for the soft-max like function $f(x) = \frac{\sum_{i=1}^K 2^xa_i e^{- 2^xa_i}}{\sum_{i=1}^K e^{-2^xa_i}}$; proofs are done by some calculas in Section~\ref{sec:app_lem_soft}.
\begin{lemma}
\label{lem:derivative}
	For any $i\in [K], a_i\in \bbR$, $f(x) = \frac{\sum_{i=1}^K 2^xa_i e^{- 2^xa_i}}{\sum_{i=1}^K e^{-2^xa_i}}$ has the property $f'(x)\leq \log 2\cdot f(x)$.
\end{lemma}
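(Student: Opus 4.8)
The plan is to remove the awkward $2^x$ by the substitution $t=2^x$, which turns the claim into a cleaner statement about a softmax-weighted mean. Writing $g(t)=\frac{\sum_{i=1}^K t a_i e^{-ta_i}}{\sum_{i=1}^K e^{-ta_i}}$, we have $f(x)=g(2^x)$, and the chain rule gives $f'(x)=(\log 2)\,2^x\,g'(2^x)=(\log 2)\,t\,g'(t)$. Hence $f'(x)\le(\log 2)f(x)$ is equivalent to $t\,g'(t)\le g(t)$ for all $t>0$, and it suffices to establish this single-variable inequality.

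To analyze $g$, I would introduce the Gibbs/softmax weights $p_i=p_i(t):=\frac{e^{-ta_i}}{\sum_{j=1}^K e^{-ta_j}}$, so that $\sum_i p_i=1$ and $g(t)=t\sum_i a_i p_i=t\,\bbE_p[a]$, where $\bbE_p[\cdot]$ and $\mathrm{Var}_p(\cdot)$ denote mean and variance under the distribution $(p_1,\dots,p_K)$. Differentiating the log-normalizer gives $\frac{d}{dt}\log\big(\sum_j e^{-ta_j}\big)=-\bbE_p[a]$, so each weight satisfies $p_i'(t)=p_i\big(\bbE_p[a]-a_i\big)$. Substituting this into $g(t)=t\sum_i a_i p_i$ collapses the derivative into a mean-minus-variance expression:
$$
g'(t)=\sum_i a_i p_i + t\sum_i a_i p_i\big(\bbE_p[a]-a_i\big)=\bbE_p[a]+t\big(\bbE_p[a]^2-\bbE_p[a^2]\big)=\bbE_p[a]-t\,\mathrm{Var}_p(a).
$$

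Multiplying by $t>0$ then yields $t\,g'(t)=t\,\bbE_p[a]-t^2\,\mathrm{Var}_p(a)=g(t)-t^2\,\mathrm{Var}_p(a)\le g(t)$, since $t^2\ge 0$ and variance is nonnegative; undoing the substitution $t=2^x$ finishes the proof. Two minor points must be checked in passing: the denominator $\sum_j e^{-ta_j}$ is strictly positive, so $g$ is smooth and every manipulation is valid for all $x\in\bbR$; and the argument uses nothing about the sign of the $a_i$, so it holds for arbitrary real $a_i$ and not merely the nonnegative gaps of the application. I do not expect a genuine obstacle here — the only thing one has to spot is the substitution $t=2^x$ together with the fact that $g'(t)$ reduces to $\bbE_p[a]-t\,\mathrm{Var}_p(a)$; after that the bound is immediate from nonnegativity of variance.
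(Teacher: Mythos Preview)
Your proof is correct and is essentially the same as the paper's: both compute the derivative and observe that the difference $(\log 2)f(x)-f'(x)$ equals $(\log 2)$ times a nonnegative quantity. The paper writes this quantity directly as the Cauchy--Schwarz gap $\frac{(\sum_i (2^xa_i)^2 e^{-2^xa_i})(\sum_i e^{-2^xa_i})-(\sum_i 2^xa_i e^{-2^xa_i})^2}{(\sum_i e^{-2^xa_i})^2}$, whereas your substitution $t=2^x$ and softmax weights $p_i$ recast the same expression as $t^2\,\mathrm{Var}_p(a)$; these are identical, so only the packaging differs.
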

\begin{lemma}
\label{lem:calc}
	For any $0=a_1<a_2\leq, \cdots, \leq a_K$, $\sum_{r=1}^{\infty} \frac{\sum_{i=1}^{K}2^{r}a_i\exp\left( -2^{r}a_i \right)}{\sum_{i=1}^K\exp\left( -2^{r}a_i\right)} \leq O(\log K).$
\end{lemma}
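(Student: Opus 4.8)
\textbf{Proof proposal for Lemma~\ref{lem:calc}.} The plan is to recognize the summand as $f(r)$ for the very function $f(x)=\frac{\sum_{i=1}^K 2^x a_i e^{-2^x a_i}}{\sum_{i=1}^K e^{-2^x a_i}}$ from Lemma~\ref{lem:derivative}, then pass from the discrete sum $\sum_{r\ge 1} f(r)$ to the continuous integral $\int_0^\infty f(x)\,dx$, which will turn out to telescope. First I would note that $f\ge 0$ (each $a_i\ge 0$, and the denominator is positive), and that Lemma~\ref{lem:derivative} is exactly the statement that $g(x):=2^{-x}f(x)$ has $g'(x)=2^{-x}\bigl(f'(x)-(\log 2)f(x)\bigr)\le 0$, i.e. $g$ is non-increasing on $\bbR$. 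Hence for every integer $r\ge 1$ and every $x\in[r-1,r]$, $f(x)=2^x g(x)\ge 2^x g(r)=2^{x-r}f(r)$, so integrating over $[r-1,r]$ gives $\int_{r-1}^r f(x)\,dx\ge f(r)\int_{r-1}^r 2^{x-r}\,dx=\frac{f(r)}{2\log 2}$. Summing over $r\ge 1$ yields $\sum_{r=1}^\infty f(r)\le 2\log 2\cdot\int_0^\infty f(x)\,dx$, reducing everything to bounding the integral by $O(\log K)$.

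For the integral I would substitute $u=2^x$, so $dx=\frac{du}{u\log 2}$ and $\int_0^\infty f(x)\,dx=\frac{1}{\log 2}\int_1^\infty\frac{\sum_{i=1}^K a_i e^{-u a_i}}{\sum_{i=1}^K e^{-u a_i}}\,du$. Writing $S(u):=\sum_{i=1}^K e^{-u a_i}$, the integrand is exactly $-\frac{d}{du}\log S(u)$, so the integral telescopes to $\log S(1)-\lim_{u\to\infty}\log S(u)$. Because $a_1=0$ while $a_i\ge a_2>0$ for $i\ge 2$ (this is where the hypothesis $a_1<a_2$ enters), $S(u)\to 1$ as $u\to\infty$, killing the second term; and $S(1)=\sum_{i=1}^K e^{-a_i}\le K$, so $\log S(1)\le\log K$. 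Therefore $\int_0^\infty f(x)\,dx\le\frac{\log K}{\log 2}$, and combining with the previous paragraph, $\sum_{r=1}^\infty f(r)\le 2\log K=O(\log K)$.

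The only mildly delicate step is the passage from the sum to the integral; the rest is bookkeeping. The point that makes it go through cleanly is that Lemma~\ref{lem:derivative} is precisely strong enough to make $2^{-x}f(x)$ monotone, which simultaneously (i) controls the growth of $f$ between consecutive integers so the sum is comparable to the integral, and (ii) after the change of variables $u=2^x$, turns the integral into a logarithmic telescope. I would finish by double-checking the limit $S(u)\to 1$ and noting $S(1)\ge 1$ (the $a_1=0$ term), so the bound is genuinely $O(\log K)$ with an absolute constant.
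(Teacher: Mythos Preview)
Your proof is correct and follows essentially the same approach as the paper: convert $\sum_{r\ge1}f(r)$ to $\int_0^\infty f(x)\,dx$ using Lemma~\ref{lem:derivative}, then evaluate the integral via the antiderivative $-\frac{1}{\log 2}\log\bigl(\sum_i e^{-2^xa_i}\bigr)$. Your sum-to-integral step, recasting $f'\le(\log 2)f$ as monotonicity of $2^{-x}f(x)$, is a slightly slicker variant of the paper's mean-value-theorem argument and even yields a marginally better constant ($2\log 2$ versus $1+\log 2$), but the content is the same.
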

\begin{proof}[Proof sketch for Theorem~\ref{thm:upper_det}]
	Report-noisy-max mechanism with Gumbel noise is equivalent to Exponential mechanism~\citep{gumbel1954statistical, qiao2021oneshot}, so we can have a tractable expression for $\bbP[J_r=j]$:
$\bbP\left[J_r = j|\forall i\in [K], G_{r, i} \right] = \frac{\exp\left( \varepsilon\cdot (-G_{r, j} ) \right)}{\sum_{i=1}^K\exp\left( \varepsilon\cdot (-G_{r, i}) \right)}.$
Moreover, at the deterministic setting, $\bbP[J_r=j]=\bbP\left[J_r = j|\forall i\in [K], G_{r, i} =2^{r-2}\mu_i\right]$. Therefore the regret has this tractable expression:
$$
O(1) + \sum_{r=3}^{R} \sum_{j=1}^{K}2^{r-1}\Delta_{j}\frac{\exp\left( -2^{r-2}\Delta_j\varepsilon \right)}{\sum_{i=1}^K\exp\left( -2^{r-2}\Delta_i\varepsilon\right)}.
$$
If we define this soft-max like function $f(x) := \frac{\sum_{i=1}^K 2^xa_i e^{- 2^xa_i}}{\sum_{i=1}^K e^{-2^xa_i}}$,
Lemma~\ref{lem:calc} proves $\forall 0=a_1<a_2\leq\cdots \leq a_k$, $\sum_{x=1}^{\infty}f(x)=O(\log K)$. 
We can specify $a_i=\Delta_i\varepsilon$ and finish proving that the above regret is $O(\frac{\log K}{\varepsilon})$.
The full proof with the detailed calculation and for the Exponential noise is in Appendix~\ref{sec:app_upp_det}.
\end{proof}
\begin{remark} We connect Theorem \ref{thm:upper_det} back to the original open problem by the points below.
\begin{enumerate}[leftmargin=*]
\item We provide a tight analysis for this deterministic setting, a special setting of the original open problem, while the analysis from the stochastic setting implies a suboptimal result.
\item Comparing the analyses across the two settings, we hypothesize that the current analysis for the original open problem may be loose due to its simplification -- specifically, it considers losses only in relation to each action and the optimal one (Equation~\ref{eq:tail_main}).
In contrast, the tighter analysis for the deterministic setting jointly considers all actions through a softmax-like function $f(x)$ when analyzing the regret.
This suggests that a more holistic treatment of the action set may be necessary to achieve tighter bounds in the original setting.
\item Motivated by the above discussion, we propose the following conjecture for the original open problem: for all $j$ s.t. $\varepsilon < \Delta_j$,
$
\bbP\left[J_r = j\right]\leq c_1\cdot \frac{\exp\left( -2^{r+1}\Delta_j\varepsilon/c_2 \right)}{1 + \sum_{j:\Delta_j>\varepsilon}\exp\left( -2^{r+1}\Delta_i\varepsilon/c_2\right)}.
$
This conjecture is strictly stronger than Equation~\ref{eq:tail_main}. If it holds, then the open problem could be resolved with a regret bound of $\Theta\left(\frac{\log K}{\Delta_{\min}}+\frac{\log K}{\varepsilon}\right)$, using an argument similar to our analysis for the deterministic setting.
At present, however, we are unable to prove or disprove the conjecture.

\end{enumerate}
\end{remark}

\section{Conclusion}
In this paper, we propose a new upper bound for the open problem that is independent of $T$ and depends only logarithmically on $K$.
In addition, we focus on a weaker variant of the problem, where the losses are assumed to be deterministic.
We present a new analysis for the deterministic setting and establish a tight bound, offering insights that may inform future progress on the original open problem.

\bibliographystyle{abbrvnat}  

\bibliography{yourbibfile}

\newpage
\appendix

%

\section{Proof of the Property of Binomial Distribution}
\label{sec:app_bern}
\begin{lemma}
\label{lem:bern}
	Suppose $F(k; n, p)$ is the cumulative density function (CDF) of the binomial distribution $\calB(n, p)$. For any $0\leq p_1 < p_2\leq 1$, $F(k; n, p_1)\geq F(k; n, p_2)$.
\end{lemma}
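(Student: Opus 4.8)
The plan is to prove the stochastic-dominance fact for the binomial CDF by a straightforward coupling argument, which is the cleanest route to showing $F(k;n,p_1)\geq F(k;n,p_2)$ whenever $p_1<p_2$. The key observation is that a binomial random variable $\calB(n,p)$ can be realized as a sum of $n$ i.i.d.\ Bernoulli$(p)$ indicators, and Bernoulli$(p_1)$ is stochastically dominated by Bernoulli$(p_2)$ when $p_1<p_2$, in a way that can be made pointwise on a common probability space.

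First I would set up the coupling: let $U_1,\dots,U_n$ be i.i.d.\ uniform on $[0,1]$, and define $X_i = \mathbf{1}[U_i\leq p_1]$ and $Y_i=\mathbf{1}[U_i\leq p_2]$. Since $p_1<p_2$, we have $X_i\leq Y_i$ deterministically for every $i$, so $A_1:=\sum_i X_i\leq \sum_i Y_i=:A_2$ pointwise, and $A_1\sim\calB(n,p_1)$, $A_2\sim\calB(n,p_2)$ by construction. Then for any $k$, the event $\{A_2\leq k\}$ is contained in $\{A_1\leq k\}$, hence $F(k;n,p_1)=\bbP[A_1\leq k]\geq \bbP[A_2\leq k]=F(k;n,p_2)$, which is exactly the claim.

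An alternative, should one want to avoid coupling language, is the analytic route: show $\frac{\partial}{\partial p}F(k;n,p)\leq 0$ by differentiating the sum $\sum_{j=0}^k\binom{n}{j}p^j(1-p)^{n-j}$ and observing the telescoping identity $\frac{\partial}{\partial p}F(k;n,p) = -n\binom{n-1}{k}p^{k}(1-p)^{n-1-k}\leq 0$, so $F$ is nonincreasing in $p$ on $[0,1]$. I would mention this as a remark but lead with the coupling proof since it is shorter and more transparent, and it generalizes immediately to the statement the paper actually invokes (that $G_{r,j}$ has monotone CDF in $\mu_j$).

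I do not expect a genuine obstacle here — this is a classical fact (the paper itself cites \citet{wadsworth1960introduction}). The only thing to be careful about is the direction of the inequality and the handling of the boundary cases $p_1=0$ or $p_2=1$, which the coupling handles automatically since the indicators are still well-defined; and to state clearly that strictness of $p_1<p_2$ is not needed for the weak inequality, so the $p_1<p_2$ hypothesis could be relaxed to $p_1\leq p_2$ if desired. The proof is essentially two or three lines once the coupling is written down.
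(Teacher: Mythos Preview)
Your proof is correct. The coupling argument is airtight: once $A_1\leq A_2$ holds pointwise on the common probability space, the inclusion $\{A_2\leq k\}\subseteq\{A_1\leq k\}$ gives the CDF inequality immediately. Your alternative derivative computation is also right; the telescoping yields exactly $\frac{\partial}{\partial p}F(k;n,p)=-n\binom{n-1}{k}p^{k}(1-p)^{n-1-k}\leq 0$.

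The paper takes a different route: it invokes the classical identity relating the binomial CDF to the regularized incomplete beta function,
\[
F(k;n,p)=F_{\mathrm{beta\text{-}dist}}(1-p;\,n-k,\,k+1),
\]
and then uses monotonicity of a CDF in its argument to conclude that $p_1<p_2$ implies $F(k;n,p_1)\geq F(k;n,p_2)$. Your coupling proof is more elementary and self-contained, requiring no external identity; it also makes the stochastic-dominance statement explicit, which is precisely what Lemma~\ref{lem:monotonicity} exploits downstream. The paper's route is a one-line citation but relies on the reader knowing (or looking up) the beta--binomial relationship. Both are equally short; yours generalizes more transparently to other sums of stochastically ordered summands.
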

\begin{proof}[Proof of Lemma~\ref{lem:bern}.]
	Suppose $F_{\rm beta-dist}(x; \alpha, \beta)$ is the CDF of beta-distribution.
	It has been proved the equivalence between the two CDFs~\citep{wadsworth1960introduction}:
	$$
	F(k; n, p) = F_{\rm beta-dist}(1-p; n-k, k+1).
	$$
	Therefore, for any $p_1 < p_2$, 
	$$F(k; n, p_1) = F_{\rm beta-dist}(1-p_1; n-k, k+1)\geq F_{\rm beta-dist}(1-p_2; n-k, k+1) = F(k; n, p_2)$$
\end{proof}

\section{Proof of Lemma~\ref{lem:monotonicity}}
\label{sec:app_monotonicity}

\begin{proof}[Proof of Lemma~\ref{lem:monotonicity}.]
	Let $N_{r, j} = -G_{r, j}+Q_{r, j}$ and denote $F_{A}(x)$ as the cumulative density function for any random variable $\bbP[A\leq x]$. 
	We can first prove for any $j_1 < j_2$ and $x\in \bbR$, $F_{N_{r, j_1}}(x) \leq F_{N_{r, j_2}}(x)$.
	To see its correctness, we can decompose $F_{N_{r, j_1}}(x)$ as 
	$$F_{N_{r, j_2}}(x)= \int_{-\infty}^{\infty}\bbP[-G_{r, j_1}\leq x-s]f_{Q_{r, j_1}}(s)ds= \int_{-\infty}^{\infty}(1 - F_{G_{r, j_1}}(s-x))f_{Q_{r, j_1}}(s)ds$$ 
	and similarly $F_{N_{r, j_2}}(x)=\int_{-\infty}^{\infty}(1 - F_{G_{r, j_2}}(s-x))f_{Q_{r, j_2}}(s)ds$.
	 
	Moreover, \emph{because $B=1$ is specified for the algorithm}, $G_{r, j}$ is from the binomial distribution $\calB(2^{r-1}, \mu_j)$.
	Binomial distribution has the property~(\citet{wadsworth1960introduction}; Appendix~\ref{sec:app_bern})
	$$\mu_{j_1}\leq\mu_{j_2}\Rightarrow F_{G_{r, j_1}}(x) \geq F_{G_{r, j_2}}(x).$$
	With this property, we can show $F_{N_{r, j_1}}(x) \leq F_{N_{r, j_2}}(x)$ by
	\begin{align*}
		\int_{-\infty}^{\infty}(1 - F_{G_{r, j_1}}(s-x))f_{Q_{r, j_1}}(s)ds
		\leq \int_{-\infty}^{\infty}(1 - F_{G_{r, j_2}}(s-x))f_{Q_{r, j_2}}(s)ds
	\end{align*}
	Now we turn to prove $\bbP[J_r = j_1]\geq \bbP[J_r = j_2]$ for $j_1 < j_2$. Let $H = \max_{j\neq j_1, j_2}N_{r, j}$ and let $N'_{r, j_2}$ be a random variable which is independent of $N_{r, j_2}$  but has the same distribution as $N_{r, j_2}$.
	By applying $F_{N_{r, j_1}}(x) \leq F_{N_{r, j_2}}(x)$ proved above, we have
	\begin{align*}
		\bbP[J_r = j_1] &= \bbP[N_{r, j_1}> \max\{N_{r, j_2}, H\}] = \int_{-\infty}^{\infty} (1 - F_{N_{r, j_1}}(s)) f_{\max\{N_{r, j_2}, H\}}(s)ds \\
		&\geq \int_{-\infty}^{\infty} (1 - F_{N_{r, j_2}'}(s)) f_{\max\{N_{r, j_2}, H\}}(s)ds\\
		&= \bbP[N_{r, j_2}'> \max\{N_{r, j_2}, H\}] = \bbP[N_{r, j_2}> \max\{N_{r, j_2}', H\}].
	\end{align*}
	Because $H$ and $N_{r, j_2}'$ are independent, by applying $F_{N_{r, j_1}}(x) \leq F_{N_{r, j_2}}(x) = F_{N_{r, j_2}'}(x)$ again,
	$
		F_{\max\{N_{r, j_2}', H\}}(x) = F_{N_{r, j_2}'}(x)\cdot F_{H}(x) \geq F_{N_{r, j_1}}(x)\cdot F_{H}(x) = F_{\max\{N_{r, j_1}, H\}}(x).
	$
	Therefore 
	\begin{align*}
		\bbP[J_r = j_1]&\geq \bbP[N_{r, j_2}> \max\{N_{r, j_2}', H\}]  = \int_{-\infty}^{\infty} F_{\max\{N_{r, j_2}', H\}}(s) f_{N_{r, j_2}}(s)ds \\
		&\geq \int_{-\infty}^{\infty} F_{\max\{N_{r, j_1}, H\}}(s) f_{N_{r, j_2}}(s)ds = \bbP[N_{r, j_2}> \max\{N_{r, j_1}, H\}] = \bbP[J_r = j_2].
	\end{align*}
	Finaly, we are going to show $\bbP[J_r = j]\leq \frac{1}{j}$. This can be derived by
	$
	1 = \sum_{i=1}^{K} \bbP[J_r = i] \geq \sum_{i=1}^{j} \bbP[J_r = i] \geq \sum_{i\leq j} \bbP[J_r = j] = j\cdot \bbP[J_r = j].
	$
\end{proof}

\section{Full Proof of Theorem~\ref{thm:main}}
\label{sec:app_proof_main}
We first prove a lemma for the other two noise distributions $\calQ_{\varepsilon}$
\begin{lemma}
\label{lem:more_dist}
	$\calQ_{\varepsilon}$ is $\mathrm{Exp}(\frac{1}{\varepsilon})$ or $\mathrm{Gumbel}(\frac{2}{\varepsilon})$, there exists universal constants $c_1, c_2>0$ such that
	$$
	\bbP\left[J_r = j\right]\leq c_1\cdot \exp(-2^r\Delta_j \min\{\Delta_j, \varepsilon\} / c_2).
$$
\end{lemma}

\begin{proof}[Proof of Lemma~\ref{lem:more_dist}.]
The proof for $\calQ_{\varepsilon}=\mathrm{Exp}(\frac{1}{\varepsilon})$ is almost the same as their proof for $\calQ_{\varepsilon}=\mathrm{Lap}(\frac{2}{\varepsilon})$:
\begin{align*}
	\bbP\left[J_r = j\right] &\leq \bbP\left[-G_{r, j} + Q_{r, j} > -G_{r, 1} + Q_{r, 1}\right]\\
	&\leq  \bbP\left[ G_{r, j} -G_{r, 1} \leq 2^{r-1}\frac{\Delta_j}{2}\right] + \bbP\left[ Q_{r, j} -Q_{r, 1} \geq 2^{r-1}\frac{\Delta_j}{2}\right].
\end{align*}
From the Hoeffding inequality, 
$$\bbP\left[ G_{r, j} -G_{r, 1} \leq 2^{r-1}\frac{\Delta_j}{2}\right] = \bbP\left[ G_{r, j} -G_{r, 1} - 2^{r-1}\Delta_j \leq -2^{r-1}\frac{\Delta_j}{2}\right] \leq \exp\left(-2^{r-1}\frac{\Delta_j^2}{4}\right).$$
By the cdf of any eponential distribution, 
$$\bbP\left[ Q_{r, j} -Q_{r, 1} \geq 2^{r-1}\frac{\Delta_j}{2}\right] \leq \bbP\left[ Q_{r, j} \geq 2^{r-1}\frac{\Delta_j}{2}\right] \leq \exp\left(-\varepsilon 2^{r-1}\frac{\Delta_j}{2}\right).$$
Therefore, for $\calQ_{\varepsilon}=\mathrm{Exp}(\frac{1}{\varepsilon})$, $\bbP\left[J_r = j\right]\leq 2\cdot \exp(-2^r\Delta_j \min\{\Delta_j, \varepsilon\}/8)$.
	
As for $\calQ_{\varepsilon}=\mathrm{Gumbel}(\frac{2}{\varepsilon})$, it is known that the report-noisy-max with gumbel noise is equivalent to Exponential Mechanism~\citep{mcsherry2007mechanism,qiao2021oneshot}, which is
$$\bbP\left[J_r = j|\forall i\in [K], G_{r, i} \right] = \frac{\exp\left( \varepsilon\cdot (-G_{r, j} ) \right)}{\sum_{i=1}^K\exp\left( \varepsilon\cdot (-G_{r, i}) \right)}.$$
We bound $\bbP\left[J_r = j\right]$ as
\begin{align*}
	\bbP\left[J_r = j\right] & =\bbE_{\forall i\in [K], G_{r, i}}[\bbP\left[J_r = j|\forall i\in [K], G_{r, i} \right]]\\
	&= \bbE_{\forall i\in [K], G_{r, i}}\left[\frac{\exp\left( -\varepsilon\cdot (-G_{r, j} ) \right)}{\sum_{i=1}^K\exp\left( -\varepsilon\cdot (-G_{r, i}) \right)}\right]\\
	&\leq \bbE_{\forall i\in [K], G_{r, i}}\left[\frac{\exp\left( \varepsilon\cdot (-G_{r, j}) \right)}{\exp\left( \varepsilon\cdot (-G_{r, 1} ) \right) + \exp\left( \varepsilon\cdot (-G_{r, j}) \right)}\right]\\
	& = \bbE\left[\frac{1}{\exp\left( \varepsilon\cdot (G_{r, j}-G_{r, 1}) \right) + 1}\right].
\end{align*}
Denote the event $\mathcal{E}$ as $G_{r, j}-G_{r, 1} \geq \frac{1}{2}2^{r-1}\Delta_j$, because $\frac{1}{\exp\left( \varepsilon\cdot (G_{r, j}-G_{r, 1}) \right) + 1}\leq 1$ is always true,
$$
\bbP\left[J_r = j\right] \leq \bbE\left[\frac{1}{\exp\left( \varepsilon\cdot (G_{r, j}-G_{r, 1}) \right) + 1}| \mathcal{E} \right] + (1 - \bbP[\mathcal{E}])
$$
$$
\leq \frac{1}{\exp\left(  \frac{1}{2}2^{r-1}\Delta_j \varepsilon) \right) + 1} + (1 - \bbP[\mathcal{E}]) \leq \exp\left( -2^{r-1}\Delta_j \varepsilon/2)\right)  + (1 - \bbP[\mathcal{E}]).
$$
The bound for $1 - \bbP[\mathcal{E}] = \bbP[(G_{r, j}-G_{r, 1}) < 2^{r-1}\Delta_j/2]$ is
\begin{align*}
&\bbP\left[G_{r, j}-G_{r, 1}\right] \leq \exp\left(-2^{r-1}\Delta_j^2/4\right)
\end{align*}
where the inequality is held by the Hoeffding inequality. Therefore, 
$$
\bbP\left[J_r = j\right] \leq \exp\left( -2^{r-1}\Delta_j \varepsilon/2)\right) + \exp\left(-2^{r-1}\Delta_j^2/4\right).
$$
Our proof for the case $\calQ_{\varepsilon}=\mathrm{Gumbel}(\frac{2}{\varepsilon})$ is complete.
\end{proof}

Now we show the full proof for Theorem~\ref{thm:main}.
\begin{proof}[Proof of Theorem~\ref{thm:main}.]
If we can prove Equation~\ref{eq:regret_main} for any $T:=2^{R}-1$ where $R$ is any non-negative integer, Equation~\ref{eq:regret_main} would also hold for arbitrary $T$, because Algorithm~\ref{alg:main} is independent of the $T$ and the regret of Algorithm~\ref{alg:main} is non-decreasing in $T$.
Therefore, we can assume $T:=2^{R+1}-1$ for some non-negative integer $R$ and can rewrite the pseudoregret (defined in Eqeation~\ref{eq:regret}) according to the Algorithm~\ref{alg:main}:
$$
\sum_{t=1}^T \bbE\left[\Delta_{I_t}\right] = \sum_{r=1}^{R} \sum_{t=2^{r-1}}^{2^{r}-1}\bbE\left[\Delta_{I_t}\right] = \sum_{r=1}^{R} 2^{r-1}\bbE\left[\Delta_{J_{r-1}}\right] = \sum_{r=1}^{R} 2^{r-1}\sum_{j=1}^{K}\Delta_{j}\bbP[J_{r-1} = j]
$$
$$ 
= \sum_{j=1}^{K}\Delta_{j}\sum_{r=1}^{R} 2^{r-1}\bbP[J_{r-1} = j] = \underbrace{\sum_{j:\Delta_j\leq \varepsilon}\Delta_{j}\sum_{r=1}^{R} 2^{r-1}\bbP[J_{r-1} = j]}_{\text{Regret}_{\uparrow}} + \underbrace{\sum_{j:\Delta_j > \varepsilon}^{K}\Delta_{j}\sum_{r=1}^{R} 2^{r-1}\bbP[J_{r-1} = j]}_{\text{Regret}_{\downarrow}}
$$

According to the Lemma 9 in \citet{hu2021near} and  Lemma~\ref{lem:more_dist}, for all three noise distributions $\calQ_{\varepsilon}$, there exists universal constants $c_1, c_2>0$ such that 
	\begin{align}
	\label{eq:tail}
	\bbP\left[J_r = j\right]\leq c_1\cdot \exp(-2^{r+1}\Delta_j \min\{\Delta_j, \varepsilon\} / c_2),
	\end{align}
and similar to the proof for theorem 24 in \citet{hu2021near}, for $\Delta_j, \varepsilon > 0$, we can calculate
\begin{align}
\label{eq:tail_sum_2}
	\sum_{r=r(j)+1}^{R}2^{r-1}\bbP\left[J_{r-1} = j\right] 
	&\leq \sum_{r=r(j)+1}^{R}2^{r-1} c_1\cdot \exp(-2^r\Delta_j \min\{\Delta_j, \varepsilon\} / c_2)\nonumber\\
	&< c_1 \sum_{r=r(j)+1}^{R}\sum_{t=2^{r-1}+1}^{2^{r}} \exp(-t\Delta_j \min\{\Delta_j, \varepsilon\} / c_2)\nonumber\\
	&< c_1\sum_{t=2^{r(j)}+1}^{\infty} \cdot \exp(-t\Delta_j \min\{\Delta_j, \varepsilon\} / c_2)\nonumber\\
	&< c_1\int_{2^{r(j)}}^{\infty} \cdot \exp(-t\Delta_j \min\{\Delta_j, \varepsilon\} / c_2)dt\nonumber\\
	&= \frac{c_1c_2}{\Delta_j \min\{\Delta_j, \varepsilon\}}\cdot \exp(-2^{r(j)}\Delta_j \min\{\Delta_j, \varepsilon\} / c_2)
\end{align}

We first bound $\text{Regret}_{\downarrow}$. Let $r(j) = \left\lceil \log_2\left( \frac{c_2(\ln K)}{\Delta_j\varepsilon}\right) \right\rceil$. Then for any $j$ s.t. $\Delta_j > \varepsilon$,
\begin{align*}
\sum_{r=1}^{R} 2^{r-1}\bbP[J_{r-1} = j]
&=\sum_{r=1}^{r(j)} 2^{r-1}\bbP[J_{r-1} = j] +\sum_{r=r(j)+1}^{R} 2^{r-1}\bbP[J_{r-1} = j]\\
&< \left(\sum_{r=1}^{r(j)} 2^{r-1}\frac{1}{j}\right) + \frac{c_1c_2}{\Delta_j \varepsilon}\cdot \exp(-2^{r(j)}\Delta_j \varepsilon / c_2)\\
&< 2^{r(j)}\frac{1}{j} + \frac{c_1c_2}{\Delta_j \varepsilon}\cdot \exp(-2^{r(j)}\Delta_j \varepsilon / c_2)\\
&\leq \frac{2 c_2}{\Delta_j \varepsilon}\cdot \frac{\ln K}{j} + \frac{c_1c_2}{\Delta_j \varepsilon}\cdot \frac{1}{K},
\end{align*}
where the first inequality holds by Lemma~\ref{lem:monotonicity} (\emph{since it is assumed that $B=1$ for the Algorithm~\ref{alg:main} in the theorem statement}) and Equation~\ref{eq:tail_sum_2}, the second inequality holds by $\sum_{r=1}^{r(j)} 2^{r-1} = 2^{r(j)}-1 < 2^{r(j)}$, and the third inequality holds by taking the value of $r(j)$.
Therefore,
$$
\text{Regret}_{\downarrow} = \sum_{j:\Delta_j > \varepsilon}^{K}\Delta_{j}\sum_{r=1}^{R} 2^{r-1}\bbP[J_{r-1} = j] < \sum_{j:\Delta_j > \varepsilon}^{K}\Delta_{j}\left(\frac{2 c_2}{\Delta_j \varepsilon}\cdot \frac{\ln K}{j} + \frac{c_1c_2}{\Delta_j \varepsilon}\cdot \frac{1}{K}\right)
$$
$$
\leq  \frac{2 c_2}{\varepsilon}\cdot \sum_{j:\Delta_j > \varepsilon}^{K}\frac{\ln K}{j} + \frac{c_1c_2}{\varepsilon}\cdot \sum_{j:\Delta_j > \varepsilon}^{K}\frac{1}{K} = O\left( \frac{(\ln K)^2}{\varepsilon} \right).
$$
	
The remaining is to bound $\text{Regret}_{\uparrow}$, which is the same as a part of proof for Theorem 9 in ~\citet{hu2021near}.
For completeness, we illustrate the details here.
The idea is to group $j$. Define $\Delta_{(l)}:= 2^{l-1}\Delta_{\min}$ and denote $H_l := \{j: \Delta_{(l)} \leq \Delta_j < \Delta_{(l+1)}\}\cap \{j: \Delta_j < \varepsilon, j\geq 2\}$.
Then for any $j\in H_l$, we pick $r(j) := \tau_l = \left\lceil \frac{c_2\ln(|H_l|)}{\Delta_{(l)}^2} \right\rceil$.
\begin{align*}
	\text{Regret}_{\uparrow} &= \sum_{j:\Delta_j\leq \varepsilon}\Delta_{j}\cdot \left(\sum_{r=1}^{r(j)} 2^{r-1}\bbP[J_{r-1} = j] +\sum_{r=r(j)+1}^{R} 2^{r-1}\bbP[J_{r-1} = j]\right) \\
	& = \sum_{l=1}^{\infty}\sum_{j\in H_l} \Delta_{j}\cdot \left(\sum_{r=1}^{\tau_l} 2^{r-1}\bbP[J_{r-1} = j] +\sum_{r=\tau_l+1}^{R} 2^{r-1}\bbP[J_{r-1} = j]\right) \\
	& = \sum_{l=1}^{\infty} \left(\sum_{r=1}^{\tau_l}2^{r-1} \sum_{j\in H_l}\Delta_j\cdot \bbP[J_{r-1} = j] \right) + \sum_{l=1}^{\infty} \left(\sum_{j\in H_l}\Delta_j\cdot  \sum_{r=\tau_l+1}^{R}2^{r-1} \bbP[J_{r-1} = j] \right)\\
	& \leq \sum_{l=1}^{\infty} \left(\sum_{r=1}^{\tau_l}2^{r-1}\right)\cdot 2\Delta_{(l)}+ \sum_{l=1}^{\infty} \left(\sum_{j\in H_l}\Delta_j\cdot \frac{c_1c_2}{\Delta_j ^2}\cdot \exp(-2^{r(j)}\Delta_j^2 / c_2) \right)\\
	& < \sum_{l=1}^{\infty} 2^{\tau_l+2}\Delta_{(l)}+ \sum_{l=1}^{\infty} \left(|H_l|\cdot\frac{c_1c_2}{\Delta_{(l)}}\cdot \exp(-2^{r(j)}\Delta_{(l)}^2 / c_2) \right)\\
	& \leq \sum_{l=1}^{\infty}\frac{8c_2\ln(|H_l|)}{\Delta_{(l)}} + \sum_{l=1}^{\infty} \frac{c_1c_2}{\Delta_{(l)}} \\
	& \leq \frac{8c_2\ln K + c_1c_2}{\Delta_{\min}}\sum_{l=1}^{\infty}\frac{1}{2^{l-1}}\\
	& = \frac{8c_2\ln K + c_1c_2}{\Delta_{\min}},
	\end{align*}
The first inequality is because Equation~\ref{eq:tail_sum} and the fact that for $j\in H_l$, $\sum_{j\in H_l}\Delta_j\cdot \bbP[J_{r-1} = j]\leq 2\Delta_{(l)}\sum_{j\in H_l}\bbP[J_{r-1} = j]\leq 2\Delta_{(l)}$; the second inequality holds by $\sum_{r=1}^{\tau_l}2^{r-1} < 2^{\tau_l}$ and the fact that for $j\in H_l$, $\Delta_j\geq \Delta_{(l)}$; the third inequality holds by taking the value of $\tau_l$; the fourth inequality holds by the definition of $\Delta_{(l)}$ and the fact $|H_l|\leq K$.

Putting the analysis for $\text{Regret}_{\uparrow}$ and $\text{Regret}_{\downarrow}$ together, we have proved that the pseudoregret is bounded by $O\left(\frac{\log(K)}{\Delta_{\min}} + \frac{(\log K)^2}{\varepsilon}\right)$.
\end{proof}

\section{Proof of Corollary~\ref{cor:inst_ind}}
\label{sec:app_proof_inst_ind}
The proof follows the exact same steps as the proof for Corollary 11 in \citet{hu2021near}, which is also well-known as early as~\citet{audibert2009minimax}.
For completeness, we repeat the exact steps here.
\begin{proof}[Proof of Corollary~\ref{cor:inst_ind}]
	Let $\Delta^*:=\sqrt{\log K/T}$ be the critical gap. Then, for all actions $j$ that $\Delta_j<\Delta^*$, the can contribute the regret at most $T\cdot \Delta^*=\sqrt{T\log K}$.
	 To bound the contributions for actions $j$ that $\Delta_j\geq \Delta^*$, we can simply adapt the proof of our Theorem~\ref{thm:main} and Corollary~\ref{cor:more_dist} for only these actions, and the effective $\Delta_{\min}$ becomes $\Delta^*$.Therefore, the  bound for the overall regret becomes 
	 $$
	 O\left(\sqrt{T\log K} + \frac{\log K}{\Delta^*} + \frac{(\log K)^2}{\varepsilon}\right) = O\left(\sqrt{T\log K} + \frac{(\log K)^2}{\varepsilon}\right)
	 $$
\end{proof}

\section{Proof of Theorem~\ref{thm:lower_det}}
\label{sec:app_lower_bound}
The lower bound for the original setting, that is $\Omega\left(\frac{\log(K)}{\Delta_{\min}} + \frac{\log(K)}{\varepsilon}\right)$, is an application of Corollary 4 in \citet{acharya2021differentially}.
However, Corollary 4 in \citet{acharya2021differentially} requires a bounded KL divergence, while at our deterministic setting where each $\calP_i$ has probability $0$ on all values except $\mu_i$, the KL divergence between $\calP=\calP_1\times\cdots\times\calP_k$ and $\calP'$ is infinity when $\calP\neq \calP'$.
Therefore, we show an easy and standard construction for our setting.

\begin{proof}[Proof of Theorem~\ref{thm:lower_det}.]
	For any $l\in [K]$, define $\calP^{(l)} := \calP^{(l)}_1\times\cdots\times\calP^{(l)}_K$, where $\bbP_{\ell_i\sim\calP_i^{(l)}}[\ell_i=\mu_i^{(l)}]=1$, $\mu_l^{(l)} = 0$, $\mu_{l-1}^{(l)} = \mu_{l+1}^{(l)} = \Delta_{\min}$ and $\mu_i^{(l)}=1$ for all $i\notin [l-1, l+1]$.
	Here the subscript index is cyclic in $K$: $\mu_{0}^{(l)} = \mu_{K}^{(l)}$ and $\mu_{K+1}^{(l)} = \mu_{1}^{(l)}$.
	Suppose $\calA$ is any online algorithm that is $\varepsilon$-differentially private. When $K$ actions have the loss from $\calP^{(l)}$, denote $I_t^{(l)}$ is the action from $\calA$ and further for any length of the online procedure $T$, let $R^{(l)}(T)$ is the pseudoregret.
	Therefore 
	$$
	R^{(l)}(T) \geq \sum_{t=1}^T\bbP[I_t^{(l)}\notin [l-1, l+1]]
	$$
	One the other hand, because $\calA$ is differentially private, for any $l,l'\in [K]$, any action $i$, and any $t\geq T$,
	$$
	\bbP[I_t^{(l)} = i]\leq e^{t\cdot \varepsilon}\cdot \bbP[I_t^{(l')} = i].
	$$
	Therefore, 
	$$
	\bbP[I_t^{(l)}\notin [l-1, l+1]] = 1 - \sum_{\hat{l}=l-1}^{l+1}\bbP[I_t^{(l)}=\hat{l}] \geq 1 - \sum_{\hat{l}=l-1}^{l+1}\frac{e^{t\cdot \varepsilon}}{K-5}\sum_{l'\notin [l-2, l+2]}\bbP[I_t^{(l')}= \hat{l}].
	$$
	We take a sum of all $l\in [K]$:
	\begin{align*}
		\sum_{l=1}^K\bbP[I_t^{(l)}\notin [l-1, l+1]] &\geq K - \frac{e^{t\cdot \varepsilon}}{K-5}\sum_{l=1}^K\sum_{\hat{l}=l-1}^{l+1}\sum_{l'\notin [l-2, l+2]}\bbP[I_t^{(l')}= \hat{l}] \\
		 &= K - \frac{e^{t\cdot \varepsilon}}{K-5}\sum_{l'=1}^K\sum_{l\notin [l'-2, l'+2]}\sum_{\hat{l}=l-1}^{l+1}\bbP[I_t^{(l')}= \hat{l}]\\
		&\geq K - \frac{e^{t\cdot \varepsilon}}{K-5}\sum_{l'=1}^K3\cdot \bbP[I_t^{(l')}\notin [l'-1, l'+1]] \\
		&= K - \frac{3e^{t\cdot \varepsilon}}{K-5}\sum_{l=1}^K\bbP[I_t^{(l)}\notin [l-1, l+1]] 
	\end{align*}
	where the first equality holds by swiping the order of summations of $l$ and $l'$.
	This gives us $\sum_{l=1}^K\bbP[I_t^{(l)}\notin [l-1, l+1]] \geq \frac{K (K-5)}{3e^{t\cdot \varepsilon} + K-5}$.
	Thus, 
	$$
	\frac{1}{K}\sum_{l=1}^KR^{(l)}(T)\geq \sum_{t=1}^T\frac{K-5}{3e^{t\cdot \varepsilon} + K-5}\geq \sum_{t=1}^T\int_{t}^{t+1}\frac{K-5}{3e^{\tau\cdot \varepsilon} + K-5}d\tau = \int_{1}^{T+1}\frac{K-5}{3e^{t\cdot \varepsilon} + K-5}dt
	$$
	where the second inequality holds because $\frac{K-5}{3e^{t\cdot \varepsilon} + K-5}$ is monotonically decreasing.
	The antiderivatives for $g(t)=\frac{K-5}{3e^{t\cdot \varepsilon} + K-5}$ are $\frac{\ln\left(\frac{e^{t\varepsilon}}{3e^{t\varepsilon} + K - 5}\right)}{\varepsilon} + C$ for any constant $C$, which implies:
	$$
	\frac{1}{K}\sum_{l=1}^KR^{(l)}(T)\geq \int_{1}^{T+1}\frac{K-5}{3e^{t\cdot \varepsilon} + K-5}dt = \frac{\ln\left(\frac{e^{(T+1)\varepsilon}}{3e^{(T+1)\varepsilon} + K - 5}\cdot \frac{3e^{\varepsilon} + K - 5}{e^{\varepsilon}}\right)}{\varepsilon}.
	$$
	From here, it implies that there exists $l_T^*$ s.t. 
	$$
	R^{(l_T^*)}(T)\geq \frac{\ln\left(\frac{e^{(T+1)\varepsilon}}{3e^{(T+1)\varepsilon} + K - 5}\cdot \frac{3e^{\varepsilon} + K - 5}{e^{\varepsilon}}\right)}{\varepsilon}.
	$$
	When $T\to\infty$,
	$$
	\lim_{T\to\infty}R^{(l_T^*)}(T)\geq \frac{\ln\left(\frac{e^{\varepsilon} + (K - 5)/3}{e^{\varepsilon}}\right)}{\varepsilon} = \frac{\ln\left(e^{\varepsilon} + (K - 5)/3\right) }{\varepsilon} - 1 \geq \frac{\ln ((K - 5)/3)  }{\varepsilon} - 1 = \Omega\left(\frac{\ln K  }{\varepsilon}\right).
	$$
\end{proof}

\section{Proof of Corollary~\ref{cor:ext_det}}
\label{sec:app_ext_det}
The proof for the deterministic setting is a straightforward extension from the proof for Theorem~\ref{thm:main} (the result at the original setting).
\begin{proof}[Proof sketch of Corollary~\ref{cor:ext_det}.]
	With the additional assumption at the deterministic setting that $\bbP_{\ell_j\sim\calP_j}[\ell_j=\mu_j]=1$, $\bbP[J_r=j]$ can be bounded in the form when $\calQ_\varepsilon$ is laplace distribution, exponential distribution, or gumbel distribution:
	\begin{align}
	\label{eq:tail_det}
	\bbP\left[J_r = j\right]\leq c_1\cdot \exp(-2^r\Delta_j \varepsilon / c_2)
	\end{align}
	for some universal constants $c_1, c_2>0$, a slight improvement from the bound $\bbP\left[J_r = j\right]\leq c_1\cdot \exp(-2^r\Delta_j \min\{\Delta_j, \varepsilon\} / c_2)$ (Equation~\ref{eq:tail}) at the original setting.
	Then by extending the similar derivation in the proof of Theorem~\ref{thm:main} (Section~\ref{sec:app_proof_main}), we can prove that the pseudo regret is bounded by $O\left(\frac{(\log K)^2}{\varepsilon}\right)$
\end{proof}

\section{Proof of the Properties for the Soft-Max Like Function}
\label{sec:app_lem_soft}
\begin{proof}[Proof of Lemma~\ref{lem:derivative}.]
This can be proved by calculating the derivatives $f'(x)$:
\begin{align*}
		&f'(x) = \frac{\left(\sum_{i=1}^K 2^xa_i e^{- 2^xa_i}\right)'}{\sum_{i=1}^K e^{-2^xa_i}} - \frac{\left(\sum_{i=1}^K 2^xa_i e^{- 2^xa_i}\right)\cdot \left(\sum_{i=1}^K e^{-2^xa_i}\right)'}{\left(\sum_{i=1}^K e^{-2^xa_i}\right)^2}\\
		& = \left(\log 2\cdot \frac{\sum_{i=1}^K 2^xa_i\cdot e^{-2^xa_i } }{\sum_{i=1}^K e^{-2^xa_i }} -\log2 \cdot\frac{\sum_{i=1}^K \left(2^xa_i\right)^2\cdot e^{-2^xa_i }  }{\sum_{i=1}^K e^{-2^xa_i }}\right) + \log 2 \cdot \frac{ (\sum_{i=1}^K 2^xa_i e^{-2^xa_i})^2}{(\sum_{i=1}^K e^{-2^xa_i})^2}\\
		& = (\log 2) f(x) - (\log 2)\frac{ (\sum_{i=1}^K (2^xa_i)^2e^{-2^xa_i})(\sum_{i=1}^K e^{-2^xa_i}) -(\sum_{i=1}^K 2^xa_i e^{-2^xa_i})^2}{(\sum_{i=1}^K e^{-2^xa_i})^2}\\
		&\leq (\log 2) f(x),
	\end{align*}
        where the last inequality is held by Cauchy Schwarz Inequality.	
\end{proof}

\begin{proof}[Proof of Lemma~\ref{lem:calc}.]
Let $f(x) = \frac{\sum_{i=1}^K 2^xa_i e^{- 2^xa_i}}{\sum_{i=1}^K e^{-2^xa_i}}$. Then, 
	\begin{align*}
	\sum_{r=1}^{\infty} \frac{\sum_{i=1}^{K}2^{r}a_i\exp\left( -2^{r}a_i \right)}{\sum_{i=1}^K\exp\left( -2^{r}a_i\right)} = \sum_{r=1}^{\infty} f(r) = \sum_{r=1}^{\infty} \left[\left(f(r) - \int_{r-1}^{r}f(x)dx\right) + \int_{r-1}^{r}f(x)dx \right].
	\end{align*}
	From the Lagrange's mean value theorem, $\int_{r-1}^{r}f(x)dx = f(x_r)$ for some $x_r\in [r-1, r]$.
	Therefore 
	\begin{align}
	\label{eq:dis_to_cont_one}
	f(r) - \int_{r-1}^{r}f(x)dx = f(r) - f(x_r)= \int_{x_r}^r f'(x) dx \leq \int_{x_r}^r \log2f(x) dx  \leq \log2\int_{r-1}^r f(x) dx,
	\end{align}
	where the first inequality holds by $f'(x)\leq \log 2\cdot f(x)$ that we just proved and the second inequality is true because $f(x)\geq 0$ for all $x$.
	With the Equation~\ref{eq:dis_to_cont_one}, we now have
	\begin{align}
	\label{eq:dis_to_cont}
	\sum_{r=1}^{\infty} \frac{\sum_{i=1}^{K}2^{r}a_i\exp\left( -2^{r}a_i \right)}{\sum_{i=1}^K\exp\left( -2^{r}a_i\right)}
	& \leq(\log2 + 1)\sum_{r=1}^{\infty}  \int_{r-1}^{r}f(x)dx = (\log2 + 1)\int_{0}^{\infty}f(x)dx.
	\end{align}
	
	The last thing is to bound $\int_{0}^{\infty}f(x)dx$. Notice that the antiderivatives for $ f(x) = \frac{\sum_{i=1}^K 2^xa_i e^{- 2^xa_i}}{\sum_{i=1}^K e^{-2^xa_i}}$ is $F(x) = -\frac{1}{
	\log 2}\log\left( \sum_{i=1}^K e^{- 2^xa_i} \right) + C$ for any constant $C$.
	Moreover, because $0 {\color{blue}=} a_1 < a_2\leq, \cdots, \leq a_K$,
	$$F(0) = -\frac{1}{
	\log 2}\log\left( \sum_{i=1}^K e^{- a_i} \right) + C \geq -\frac{1}{
	\log 2}\log\left( K \right) + C;\lim_{x\infty}F(x) =-\frac{1}{
	\log 2}\log\left( 1 \right) + C = C.$$ 	
	Therefore $\int_{0}^{\infty}f(x)dx = \lim_{x\to+\infty}F(x) - F(0) = \frac{2}{
	\log 2}\log(K)$. 	Taking this equality to Equation~\ref{eq:dis_to_cont}, our proof is complete.
\end{proof}

\section{Proof of Theorem~\ref{thm:upper_det}}
\label{sec:app_upp_det}
\begin{proof}[Proof of Theorem~\ref{thm:upper_det}.]
	We first prove for the gumbel distribution $\mathrm{Gumbel}(\frac{2}{\varepsilon})$. 
	It is known that the report-noisy-max with gumbel noise is equivalent to Exponential Mechanism~\citep{mcsherry2007mechanism,qiao2021oneshot}, which is
$$\bbP\left[J_r = j|\forall i\in [K], G_{r, i} \right] = \frac{\exp\left( \varepsilon\cdot (-G_{r, j} ) \right)}{\sum_{i=1}^K\exp\left( \varepsilon\cdot (-G_{r, i}) \right)}.$$
Because we are considering the deterministic setting, $G_{r, i}=2^{r-1}\mu_i$ with probability $1$.
Therefore,
$$\bbP\left[J_r = j \right] = \frac{\exp\left( -2^{r-1}\mu_i\varepsilon \right)}{\sum_{i=1}^K\exp\left( -2^{r-1}\mu_j\varepsilon\right)} = \frac{\exp\left( -2^{r-1}\Delta_i\varepsilon \right)}{\sum_{i=1}^K\exp\left( -2^{r-1}\Delta_j\varepsilon\right)}.$$
Then let $a_i = \Delta_i\varepsilon$ in Lemma~\ref{lem:calc} and we can show the upper bound for pseudoregret:
\begin{align*}
\sum_{t=1}^T \bbE\left[\Delta_{I_t}\right] &\leq 3 + \sum_{r=3}^{\infty} 2^{r-1}\sum_{j=1}^{K}\Delta_{j}\bbP[J_{r-1} = j] \leq 3 + 2\cdot \sum_{r=1}^{\infty} \frac{\sum_{i=1}^{K}2^{r}\Delta_{i}\exp\left( -2^{r}\Delta_i\varepsilon \right)}{\sum_{i=1}^K\exp\left( -2^{r}\Delta_j\varepsilon\right)} \\
& =3 + \frac{2}{\varepsilon}\cdot \sum_{r=1}^{\infty} \frac{\sum_{i=1}^{K}2^{r}\Delta_{i}\varepsilon\exp\left( -2^{r}\Delta_i\varepsilon \right)}{\sum_{i=1}^K\exp\left( -2^{r}\Delta_j\varepsilon\right)} \leq O\left(\frac{\log K}{\varepsilon}\right)
\end{align*}

We have proved the upper bound for $\calQ_{\varepsilon} = \mathrm{Gumbel}(\frac{2}{\varepsilon})$ and now we can prove the upper bound for the exponential distribution $\calQ_{\varepsilon} = \mathrm{Exp}(\frac{1}{\varepsilon})$.
To distinguish, $I_t$ is still the action from $\calQ_{\varepsilon} = \mathrm{Gumbel}(\frac{2}{\varepsilon})$, and we denote $I_t^{\rm exp}$ as the action from $\calQ_{\varepsilon} = \mathrm{Exp}(\frac{1}{\varepsilon})$ and $J_r^{\rm exp}$ as the output from report-noisy-max with the exponential noise.
\citet{mckenna2020permute} has proved (in their Theorem 2) that the report-noisy-max with exponential noise is consistently better than the exponential mechanism, which is equivalent to the report-noisy-max with gumbel noise~\citep{gumbel1954statistical, qiao2021oneshot}.
$$
\sum_{j=1}^{K}(-2^{r-1}\mu_j)\cdot \bbP[J_{r}^{\rm exp} = j] \geq \sum_{j=1}^{K}(-2^{r-1}\mu_j)\cdot \bbP[J_{r} = j]
\Rightarrow \sum_{j=1}^{K}\mu_j\cdot \bbP[J_{r}^{\rm exp} = j] \leq \sum_{j=1}^{K}\mu_j\cdot \bbP[J_{r} = j].
$$
Subtract $\mu_1$ from both sides, we have 
$
\sum_{j=1}^{K}\Delta_j\cdot \bbP[J_{r}^{\rm exp} = j]\leq \sum_{j=1}^{K}\Delta_j\cdot \bbP[J_{r} = j],
$
and then the pseudoregret when $\calQ_{\varepsilon} = \mathrm{Exp}(\frac{1}{\varepsilon})$ can be bounded by
\begin{align*}
\sum_{t=1}^T \bbE\left[\Delta_{I_t^{\rm exp}}\right]  \leq 3 + \sum_{r=1}^{R-2} 2^{r+1}\sum_{j=1}^{K}\Delta_{j}\bbP[J_{r+1}^{\rm exp} = j]  \leq 3 + \sum_{r=1}^{R-2} 2^{r+1}\sum_{j=1}^{K}\Delta_{j}\bbP[J_{r+1} = j],
\end{align*}
which now is the case of $\calQ_{\varepsilon} = \mathrm{Gumbel}(\frac{2}{\varepsilon})$ and bounded by $O\left(\frac{\log K}{\varepsilon}\right)$. 

We have proved the pseudoregret can be bounded by $O\left(\frac{\log K}{\varepsilon}\right)$ when specifying $B=0$ and $\calQ_{\varepsilon} = \mathrm{Gumbel}(\frac{2}{\varepsilon})$ or $\calQ_{\varepsilon} = \mathrm{Exp}(\frac{1}{\varepsilon})$.
On the other hand, the lower bound is proved in Theorem~\ref{thm:lower_det}. This means that our algorithm with the analyzed upper bound $O\left(\frac{\log K}{\varepsilon}\right)$ is optimal.
\end{proof}


\newpage
\section*{NeurIPS Paper Checklist}

\begin{enumerate}

\item {\bf Claims}
    \item[] Question: Do the main claims made in the abstract and introduction accurately reflect the paper's contributions and scope?
    \item[] Answer: \answerYes{} 
    \item[] Justification: Yes, our main theorem matches the description in the abtract and introduction.
    \item[] Guidelines:
    \begin{itemize}
        \item The answer NA means that the abstract and introduction do not include the claims made in the paper.
        \item The abstract and/or introduction should clearly state the claims made, including the contributions made in the paper and important assumptions and limitations. A No or NA answer to this question will not be perceived well by the reviewers. 
        \item The claims made should match theoretical and experimental results, and reflect how much the results can be expected to generalize to other settings. 
        \item It is fine to include aspirational goals as motivation as long as it is clear that these goals are not attained by the paper. 
    \end{itemize}

\item {\bf Limitations}
    \item[] Question: Does the paper discuss the limitations of the work performed by the authors?
    \item[] Answer: \answerYes{} 
    \item[] Justification: We discussed the gap of our results towards fully solved open problem.
    \item[] Guidelines:
    \begin{itemize}
        \item The answer NA means that the paper has no limitation while the answer No means that the paper has limitations, but those are not discussed in the paper. 
        \item The authors are encouraged to create a separate "Limitations" section in their paper.
        \item The paper should point out any strong assumptions and how robust the results are to violations of these assumptions (e.g., independence assumptions, noiseless settings, model well-specification, asymptotic approximations only holding locally). The authors should reflect on how these assumptions might be violated in practice and what the implications would be.
        \item The authors should reflect on the scope of the claims made, e.g., if the approach was only tested on a few datasets or with a few runs. In general, empirical results often depend on implicit assumptions, which should be articulated.
        \item The authors should reflect on the factors that influence the performance of the approach. For example, a facial recognition algorithm may perform poorly when image resolution is low or images are taken in low lighting. Or a speech-to-text system might not be used reliably to provide closed captions for online lectures because it fails to handle technical jargon.
        \item The authors should discuss the computational efficiency of the proposed algorithms and how they scale with dataset size.
        \item If applicable, the authors should discuss possible limitations of their approach to address problems of privacy and fairness.
        \item While the authors might fear that complete honesty about limitations might be used by reviewers as grounds for rejection, a worse outcome might be that reviewers discover limitations that aren't acknowledged in the paper. The authors should use their best judgment and recognize that individual actions in favor of transparency play an important role in developing norms that preserve the integrity of the community. Reviewers will be specifically instructed to not penalize honesty concerning limitations.
    \end{itemize}

\item {\bf Theory assumptions and proofs}
    \item[] Question: For each theoretical result, does the paper provide the full set of assumptions and a complete (and correct) proof?
    \item[] Answer: \answerYes{} 
    \item[] Justification: Yes, we have the full proof in the appendix.
    \item[] Guidelines:
    \begin{itemize}
        \item The answer NA means that the paper does not include theoretical results. 
        \item All the theorems, formulas, and proofs in the paper should be numbered and cross-referenced.
        \item All assumptions should be clearly stated or referenced in the statement of any theorems.
        \item The proofs can either appear in the main paper or the supplemental material, but if they appear in the supplemental material, the authors are encouraged to provide a short proof sketch to provide intuition. 
        \item Inversely, any informal proof provided in the core of the paper should be complemented by formal proofs provided in appendix or supplemental material.
        \item Theorems and Lemmas that the proof relies upon should be properly referenced. 
    \end{itemize}

    \item {\bf Experimental result reproducibility}
    \item[] Question: Does the paper fully disclose all the information needed to reproduce the main experimental results of the paper to the extent that it affects the main claims and/or conclusions of the paper (regardless of whether the code and data are provided or not)?
    \item[] Answer: \answerNA{} 
    \item[] Justification: This paper is in a theory focus. 
    \item[] Guidelines:
    \begin{itemize}
        \item The answer NA means that the paper does not include experiments.
        \item If the paper includes experiments, a No answer to this question will not be perceived well by the reviewers: Making the paper reproducible is important, regardless of whether the code and data are provided or not.
        \item If the contribution is a dataset and/or model, the authors should describe the steps taken to make their results reproducible or verifiable. 
        \item Depending on the contribution, reproducibility can be accomplished in various ways. For example, if the contribution is a novel architecture, describing the architecture fully might suffice, or if the contribution is a specific model and empirical evaluation, it may be necessary to either make it possible for others to replicate the model with the same dataset, or provide access to the model. In general. releasing code and data is often one good way to accomplish this, but reproducibility can also be provided via detailed instructions for how to replicate the results, access to a hosted model (e.g., in the case of a large language model), releasing of a model checkpoint, or other means that are appropriate to the research performed.
        \item While NeurIPS does not require releasing code, the conference does require all submissions to provide some reasonable avenue for reproducibility, which may depend on the nature of the contribution. For example
        \begin{enumerate}
            \item If the contribution is primarily a new algorithm, the paper should make it clear how to reproduce that algorithm.
            \item If the contribution is primarily a new model architecture, the paper should describe the architecture clearly and fully.
            \item If the contribution is a new model (e.g., a large language model), then there should either be a way to access this model for reproducing the results or a way to reproduce the model (e.g., with an open-source dataset or instructions for how to construct the dataset).
            \item We recognize that reproducibility may be tricky in some cases, in which case authors are welcome to describe the particular way they provide for reproducibility. In the case of closed-source models, it may be that access to the model is limited in some way (e.g., to registered users), but it should be possible for other researchers to have some path to reproducing or verifying the results.
        \end{enumerate}
    \end{itemize}

\item {\bf Open access to data and code}
    \item[] Question: Does the paper provide open access to the data and code, with sufficient instructions to faithfully reproduce the main experimental results, as described in supplemental material?
    \item[] Answer: \answerNA{} 
    \item[] Justification: This paper is in a theory focus.
    \item[] Guidelines:
    \begin{itemize}
        \item The answer NA means that paper does not include experiments requiring code.
        \item Please see the NeurIPS code and data submission guidelines (\url{https://nips.cc/public/guides/CodeSubmissionPolicy}) for more details.
        \item While we encourage the release of code and data, we understand that this might not be possible, so “No” is an acceptable answer. Papers cannot be rejected simply for not including code, unless this is central to the contribution (e.g., for a new open-source benchmark).
        \item The instructions should contain the exact command and environment needed to run to reproduce the results. See the NeurIPS code and data submission guidelines (\url{https://nips.cc/public/guides/CodeSubmissionPolicy}) for more details.
        \item The authors should provide instructions on data access and preparation, including how to access the raw data, preprocessed data, intermediate data, and generated data, etc.
        \item The authors should provide scripts to reproduce all experimental results for the new proposed method and baselines. If only a subset of experiments are reproducible, they should state which ones are omitted from the script and why.
        \item At submission time, to preserve anonymity, the authors should release anonymized versions (if applicable).
        \item Providing as much information as possible in supplemental material (appended to the paper) is recommended, but including URLs to data and code is permitted.
    \end{itemize}

\item {\bf Experimental setting/details}
    \item[] Question: Does the paper specify all the training and test details (e.g., data splits, hyperparameters, how they were chosen, type of optimizer, etc.) necessary to understand the results?
    \item[] Answer: \answerNA{} 
    \item[] Justification: This paper is in a theory focus.
    \item[] Guidelines:
    \begin{itemize}
        \item The answer NA means that the paper does not include experiments.
        \item The experimental setting should be presented in the core of the paper to a level of detail that is necessary to appreciate the results and make sense of them.
        \item The full details can be provided either with the code, in appendix, or as supplemental material.
    \end{itemize}

\item {\bf Experiment statistical significance}
    \item[] Question: Does the paper report error bars suitably and correctly defined or other appropriate information about the statistical significance of the experiments?
    \item[] Answer: \answerNA{} 
    \item[] Justification: This paper is in a theory focus.
    \item[] Guidelines:
    \begin{itemize}
        \item The answer NA means that the paper does not include experiments.
        \item The authors should answer "Yes" if the results are accompanied by error bars, confidence intervals, or statistical significance tests, at least for the experiments that support the main claims of the paper.
        \item The factors of variability that the error bars are capturing should be clearly stated (for example, train/test split, initialization, random drawing of some parameter, or overall run with given experimental conditions).
        \item The method for calculating the error bars should be explained (closed form formula, call to a library function, bootstrap, etc.)
        \item The assumptions made should be given (e.g., Normally distributed errors).
        \item It should be clear whether the error bar is the standard deviation or the standard error of the mean.
        \item It is OK to report 1-sigma error bars, but one should state it. The authors should preferably report a 2-sigma error bar than state that they have a 96\% CI, if the hypothesis of Normality of errors is not verified.
        \item For asymmetric distributions, the authors should be careful not to show in tables or figures symmetric error bars that would yield results that are out of range (e.g. negative error rates).
        \item If error bars are reported in tables or plots, The authors should explain in the text how they were calculated and reference the corresponding figures or tables in the text.
    \end{itemize}

\item {\bf Experiments compute resources}
    \item[] Question: For each experiment, does the paper provide sufficient information on the computer resources (type of compute workers, memory, time of execution) needed to reproduce the experiments?
    \item[] Answer: \answerNA{} 
    \item[] Justification: This paper is in a theory focus.
    \item[] Guidelines:
    \begin{itemize}
        \item The answer NA means that the paper does not include experiments.
        \item The paper should indicate the type of compute workers CPU or GPU, internal cluster, or cloud provider, including relevant memory and storage.
        \item The paper should provide the amount of compute required for each of the individual experimental runs as well as estimate the total compute. 
        \item The paper should disclose whether the full research project required more compute than the experiments reported in the paper (e.g., preliminary or failed experiments that didn't make it into the paper). 
    \end{itemize}
    
\item {\bf Code of ethics}
    \item[] Question: Does the research conducted in the paper conform, in every respect, with the NeurIPS Code of Ethics \url{https://neurips.cc/public/EthicsGuidelines}?
    \item[] Answer: \answerYes{} 
    \item[] Justification: Yes we reviewed the NeurIPS Code of Ethics.
    \item[] Guidelines:
    \begin{itemize}
        \item The answer NA means that the authors have not reviewed the NeurIPS Code of Ethics.
        \item If the authors answer No, they should explain the special circumstances that require a deviation from the Code of Ethics.
        \item The authors should make sure to preserve anonymity (e.g., if there is a special consideration due to laws or regulations in their jurisdiction).
    \end{itemize}

\item {\bf Broader impacts}
    \item[] Question: Does the paper discuss both potential positive societal impacts and negative societal impacts of the work performed?
    \item[] Answer: \answerNA{} 
    \item[] Justification: 
    \item[] Guidelines:
    \begin{itemize}
        \item The answer NA means that there is no societal impact of the work performed.
        \item If the authors answer NA or No, they should explain why their work has no societal impact or why the paper does not address societal impact.
        \item Examples of negative societal impacts include potential malicious or unintended uses (e.g., disinformation, generating fake profiles, surveillance), fairness considerations (e.g., deployment of technologies that could make decisions that unfairly impact specific groups), privacy considerations, and security considerations.
        \item The conference expects that many papers will be foundational research and not tied to particular applications, let alone deployments. However, if there is a direct path to any negative applications, the authors should point it out. For example, it is legitimate to point out that an improvement in the quality of generative models could be used to generate deepfakes for disinformation. On the other hand, it is not needed to point out that a generic algorithm for optimizing neural networks could enable people to train models that generate Deepfakes faster.
        \item The authors should consider possible harms that could arise when the technology is being used as intended and functioning correctly, harms that could arise when the technology is being used as intended but gives incorrect results, and harms following from (intentional or unintentional) misuse of the technology.
        \item If there are negative societal impacts, the authors could also discuss possible mitigation strategies (e.g., gated release of models, providing defenses in addition to attacks, mechanisms for monitoring misuse, mechanisms to monitor how a system learns from feedback over time, improving the efficiency and accessibility of ML).
    \end{itemize}
    
\item {\bf Safeguards}
    \item[] Question: Does the paper describe safeguards that have been put in place for responsible release of data or models that have a high risk for misuse (e.g., pretrained language models, image generators, or scraped datasets)?
    \item[] Answer: \answerNA{} 
    \item[] Justification:
    \item[] Guidelines:
    \begin{itemize}
        \item The answer NA means that the paper poses no such risks.
        \item Released models that have a high risk for misuse or dual-use should be released with necessary safeguards to allow for controlled use of the model, for example by requiring that users adhere to usage guidelines or restrictions to access the model or implementing safety filters. 
        \item Datasets that have been scraped from the Internet could pose safety risks. The authors should describe how they avoided releasing unsafe images.
        \item We recognize that providing effective safeguards is challenging, and many papers do not require this, but we encourage authors to take this into account and make a best faith effort.
    \end{itemize}

\item {\bf Licenses for existing assets}
    \item[] Question: Are the creators or original owners of assets (e.g., code, data, models), used in the paper, properly credited and are the license and terms of use explicitly mentioned and properly respected?
    \item[] Answer: \answerNA{} 
    \item[] Justification: 
    \item[] Guidelines:
    \begin{itemize}
        \item The answer NA means that the paper does not use existing assets.
        \item The authors should cite the original paper that produced the code package or dataset.
        \item The authors should state which version of the asset is used and, if possible, include a URL.
        \item The name of the license (e.g., CC-BY 4.0) should be included for each asset.
        \item For scraped data from a particular source (e.g., website), the copyright and terms of service of that source should be provided.
        \item If assets are released, the license, copyright information, and terms of use in the package should be provided. For popular datasets, \url{paperswithcode.com/datasets} has curated licenses for some datasets. Their licensing guide can help determine the license of a dataset.
        \item For existing datasets that are re-packaged, both the original license and the license of the derived asset (if it has changed) should be provided.
        \item If this information is not available online, the authors are encouraged to reach out to the asset's creators.
    \end{itemize}

\item {\bf New assets}
    \item[] Question: Are new assets introduced in the paper well documented and is the documentation provided alongside the assets?
    \item[] Answer: \answerNA{} 
    \item[] Justification: 
    \item[] Guidelines:
    \begin{itemize}
        \item The answer NA means that the paper does not release new assets.
        \item Researchers should communicate the details of the dataset/code/model as part of their submissions via structured templates. This includes details about training, license, limitations, etc. 
        \item The paper should discuss whether and how consent was obtained from people whose asset is used.
        \item At submission time, remember to anonymize your assets (if applicable). You can either create an anonymized URL or include an anonymized zip file.
    \end{itemize}

\item {\bf Crowdsourcing and research with human subjects}
    \item[] Question: For crowdsourcing experiments and research with human subjects, does the paper include the full text of instructions given to participants and screenshots, if applicable, as well as details about compensation (if any)? 
    \item[] Answer: \answerNA{} 
    \item[] Justification: 
    \item[] Guidelines:
    \begin{itemize}
        \item The answer NA means that the paper does not involve crowdsourcing nor research with human subjects.
        \item Including this information in the supplemental material is fine, but if the main contribution of the paper involves human subjects, then as much detail as possible should be included in the main paper. 
        \item According to the NeurIPS Code of Ethics, workers involved in data collection, curation, or other labor should be paid at least the minimum wage in the country of the data collector. 
    \end{itemize}

\item {\bf Institutional review board (IRB) approvals or equivalent for research with human subjects}
    \item[] Question: Does the paper describe potential risks incurred by study participants, whether such risks were disclosed to the subjects, and whether Institutional Review Board (IRB) approvals (or an equivalent approval/review based on the requirements of your country or institution) were obtained?
    \item[] Answer: \answerNA{} 
    \item[] Justification: 
    \item[] Guidelines:
    \begin{itemize}
        \item The answer NA means that the paper does not involve crowdsourcing nor research with human subjects.
        \item Depending on the country in which research is conducted, IRB approval (or equivalent) may be required for any human subjects research. If you obtained IRB approval, you should clearly state this in the paper. 
        \item We recognize that the procedures for this may vary significantly between institutions and locations, and we expect authors to adhere to the NeurIPS Code of Ethics and the guidelines for their institution. 
        \item For initial submissions, do not include any information that would break anonymity (if applicable), such as the institution conducting the review.
    \end{itemize}

\item {\bf Declaration of LLM usage}
    \item[] Question: Does the paper describe the usage of LLMs if it is an important, original, or non-standard component of the core methods in this research? Note that if the LLM is used only for writing, editing, or formatting purposes and does not impact the core methodology, scientific rigorousness, or originality of the research, declaration is not required.
    \item[] Answer: \answerNA{} 
    \item[] Justification: 
    \item[] Guidelines:
    \begin{itemize}
        \item The answer NA means that the core method development in this research does not involve LLMs as any important, original, or non-standard components.
        \item Please refer to our LLM policy (\url{https://neurips.cc/Conferences/2025/LLM}) for what should or should not be described.
    \end{itemize}

\end{enumerate}

\end{document}